\newcommand*{\Reals}{\mathbb{R}}
\renewcommand*{\O}{\mathcal{O}}
\newcommand*{\corr}{\mathrm{corr}}
\newcommand*{\e}[1]{\cdot 10^{#1}}
\newcommand*{\p}[2]{\frac{\partial #1}{\partial #2}}
\newcommand*{\sgn}{\mathrm{sgn}}
\newcommand*{\var}{\mathrm{Var}}
\newcommand*{\cauchy}{\mathrm{Cauchy}}
\renewcommand{\v}{\mathbf{v}}
\newcommand{\w}{\mathbf{w}}
\newcommand{\x}{\mathbf{x}}
\newcommand{\y}{\mathbf{y}}
\newcommand*{\vbar}{\bar{v}}
\newcommand*{\dx}{\Delta x}
\newcommand{\stack}[2]{\genfrac{}{}{0pt}{}{#1}{#2}}
\newtheorem{theorem}{Theorem}
\newtheorem{lemma}{Lemma}
\newtheorem{corollary}{Corollary}
\newtheorem{conjecture}[theorem]{Conjecture}
\theoremstyle{definition}
\newtheorem*{definition}{Definition}
\theoremstyle{remark}
\newtheorem*{remark}{Remark}
\tikzstyle{activation} = [draw, rectangle, fill=green!20, minimum height=6mm, minimum width=9mm]
\tikzstyle{combo} = [draw, rectangle, fill=blue!20, minimum height=1cm, minimum width=1.2cm, font=\footnotesize]
\tikzstyle{knot} = [draw, fill=blue, blue, circle, minimum width=1mm, inner sep=0pt]
\tikzstyle{neuron} = [draw, rectangle, fill=red!20, minimum height=6mm, minimum width=6mm]
\tikzstyle{scalar} = [minimum height=5.5mm, minimum width=5mm]
\tikzstyle{subfig} = [font=\small, inner sep=0pt, anchor=north west]
\tikzstyle{xtick} = [pos=0, below, black]
\title{Knots in random neural networks}
\author{%
    Kevin K. Chen \\
    Institute for Defense Analyses \\
    Center for Communications Research - La Jolla \\
    San Diego, CA 92121 \\
    \texttt{kkchen@ccrwest.org} \\
    \And
    Anthony C. Gamst \\
    Institute for Defense Analyses \\
    Center for Communications Research - La Jolla \\
    San Diego, CA 92121 \\
    \texttt{acgamst@ccrwest.org} \\
    \And
    Alden K. Walker \\
    Institute for Defense Analyses \\
    Center for Communications Research - La Jolla \\
    San Diego, CA 92121 \\
    \texttt{akwalke@ccrwest.org}%
}
\begin{document}

\maketitle

\begin{abstract}
    The weights of a neural network are typically initialized at random, and one can think of the functions produced by such a network as having been generated by a prior over some function space.
    Studying random networks, then, is useful for a Bayesian understanding of the network evolution in early stages of training.
    In particular, one can investigate why neural networks with huge numbers of parameters do not immediately overfit.
    We analyze the properties of random scalar-input feed-forward rectified linear unit architectures, which are random linear splines.
    With weights and biases sampled from certain common distributions, empirical tests show that the number of knots in the spline produced by the network is equal to the number of neurons, to very close approximation.
    We describe our progress towards a completely analytic explanation of this phenomenon.
    In particular, we show that random single-layer neural networks are equivalent to integrated random walks with variable step sizes.
    That each neuron produces one knot on average is equivalent to the associated integrated random walk having one zero crossing on average.
    We explore how properties of the integrated random walk, including the step sizes and initial conditions, affect the number of crossings.
    The number of knots in random neural networks can be related to the behavior of extreme learning machines, but it also establishes a prior preventing optimizers from immediately overfitting to noisy training data.
\end{abstract}

\section{Introduction}

In recent years, neural networks have experienced a resurgence in practical data science applications, and have been responsible for improvements in these applications by leaps and bounds.
The power of neural networks is in part an effect of the well-known universal approximation theorems \citep{CybenkoMCSS89,HornikNN91,HornikNN89,SonodaACHA}, which essentially allow neural networks to model any vector-valued continuous function arbitrary well.
Despite the great success stories attributed to neural networks, however, the analytical understanding of their successes have not been understood as well as their applicability and use in specific applications.
Even the universal approximation theorems do not specify the rate that neural networks train on data, given the network size and various training data parameters.

In a companion paper \citep{ccr86994}, we showed that neural networks with commonly-employed rectified linear unit activation functions are simply linear splines.
Thus, one of many ways to measure the \emph{complexity} or \emph{expressivity} of a neural network is to count the number of spline knots or linear pieces in the entire domain of the neural network \citep{MontufarNIPS14, Pascanu14, Raghu16, Arora16}.
Knowledge of the maximum or expected number of knots in a neural network of a given width (i.e., neurons per layer) and depth (i.e., layers) is useful in neural network design.
If a designer had \emph{a priori} knowledge of the desired model complexity, then he could ensure that the neural network size is sufficient large.

Whereas we have previously derived an upper bound on the number of knots in neural networks \cite{ccr86994}, we now explore the number of knots in networks with random weights and biases.
This type of analysis is admittedly uncommon, and the motivations unintuitive.
Nevertheless, random neural networks provide a better venue for rigorous statistical analysis than application-specific networks.
Furthermore, there are at least three ways that the behavior of random neural networks can ultimately be related back to that of neural networks trained on actual data.

First, random neural networks are more representative of ``average-case'' neural networks than the ``best-case'' networks meeting the upper bound on the number of knots.
We showed that to construct a neural network with the maximum number of knots, every affine transformation in every layer except the output must be a maximally high-wavenumber sawtooth wave \cite{ccr86994}.
Such a construction is useful in providing a ``brick-wall'' limit on the expressivity of neural networks, but is extremely unlikely to be encountered in practice.
In fact, in a companion paper~\cite{WalkerSCAMP16}, we show that optimizers tend to learn low-wavenumber content before high-wavenumber content in underlying functions, and that very high wavenumbers may never be fully learned.
On the other hand, random neural networks are not artificially constructed to have maximally high wavenumbers, and---at least roughly speaking---are more representative of real data.

Second, we will later show that even when neural networks are trained on actual data, the weights and biases are typically random or close to random in the early stages of the training.
Therefore, understanding the behavior of random neural networks is important for analyzing the process of optimizing neural networks.
In particular, commonly-employed neural network optimizers typically do not immediately overfit neural networks to noisy training data.
It will become apparent that random optimization establishes a prior on the function space on which trained neural networks reside, and that the smoothness of random neural networks prevents overfitting.

Finally, random weights are a crucial element in the variant of neural networks known as extreme learning machines \cite{HuangNeurocomputing06}.
In extreme learning machines, input weights are chosen randomly, and output weights following a single hidden layer of neurons are solved by least-squares.
Extreme learning machines are very simple and can be trained easily.
They have been employed successfully in simple problems \citep{HuangNN15}, and represent a real-life use of random neural networks.
Further improvements may be possible, for instance, by designing a large number of random networks and applying $L^1$ regularization to combine them.

This paper is fairly exploratory in nature.
As such, it covers a few different topics related to random neural networks.
The key focus of this paper is the empirical result that the number of knots in a scalar-input random neural network is simply the number of neurons in the network, to very close approximation.
This is in contrast to the upper bound, which for $n$ neurons in each of $l$ layers is approximately $n^l$ \citep{ccr86994}.
Although we do not rigorously derive the expected number of knots in random neural networks, we outline our progress toward this objective.
Complete results are given where they are available, but many other aspects are more conjectural, empirical, or otherwise open for further exploration.

The topics and sections of this paper are as follows.
In \autoref{sec:neural-nets}, we briefly review the neural network architecture.
The key empirical result on the number of knots in random neural networks is given in \autoref{sec:empirical}, though further numerical experiments are discussed later as they become relevant.
Next, \autoref{sec:analysis} analyzes the behavior of random neural networks from a few different perspectives.
We consider the relation between knots in neural networks and roots of the affine transformations within neurons, and we analyze the effects of different probability distributions on the number of knots.
Afterwards, we draw an equivalence between random neural networks and variable-step-size integrated random walks in \autoref{sec:integrated-random-walk}.
We demonstrate an equivalence between the number of knots per neuron and the number of zero crossings in the associated integrated random walk \citep{DenisovAIHPPS15,GroeneboomAP99,SinaiTMP92}.
The majority of this section explores how properties of the associated integrated random walk, such as the step sizes and initial conditions, are related to the number of zero crossings.
\autoref{sec:early-training} then relates random neural networks to the early stages of neural network training, and discusses how the weight and bias distributions we explore in this paper actually appear during optimization processes.
Finally, we summarize our results and comment on directions for future research in \autoref{sec:conclusion}.

\section{Brief description of neural networks}
\label{sec:neural-nets}

In this section, we quickly review the basic definitions and properties of the neural network architecture.
We intentionally keep this section concise.
A more detailed description and accompanying figures can be found in a companion paper \citep{ccr86994}, and an overview of neural networks and other machine learning techniques can be found in \citet{KnoxML}.

For this paper, we will use the commonly employed rectified linear unit $\sigma(x) = \max(0, x)$ as the nonlinear activation function in our neural networks.
To construct a single-layer $\Reals^q \to \Reals^p$ neural network with $n$ neurons, we select input weights $\w_{1k} \in \Reals^q$ and input biases $b_{1k} \in \Reals$ for $k = 1, \dots, n$, and output weights $\w_{2k} \in \Reals^n$ and output biases $b_{2k} \in \Reals$ for $k = 1, \dots, p$.
Using the shorthand notation $\v := [v_1 \; \cdots \; v_n]$ to indicate the hidden layer outputs, the single-layer neural network maps $\x \in \Reals^q$ to $\y = [\eta_1 \; \cdots \; \eta_p] \in \Reals^p$ by
\begin{subequations}
    \label{eq:shallow}
    \begin{align}
        \label{eq:shallow-input}
        v_k &:= \sigma(\w_{1k} \cdot \x + b_{1k}), \quad
        k = 1, \dots, n, \\
        \label{eq:shallow-output}
        \eta_k &:= \w_{2k} \cdot \v + b_{2k}, \quad
        k = 1, \dots, p.
    \end{align}
\end{subequations}

A deep neural network is conceptually identical, except that a serial chain of hidden layers exists between the input and the output.
Now, we consider $l$ hidden layers, with $n_i$ neurons in layer $i = 1, \dots, l$.
As before, we assign input weights $\w_{1k} \in \Reals^q$ and input biases $b_{1k} \in \Reals$, now for $k = 1, \dots, n_1$.
Furthermore, we choose additional weights $\w_{ik} \in \Reals^{n_{i-1}}$ and biases $b_{ik} \in \Reals$ for layers $i = 2, \dots, l$, and for $k = 1, \dots, n_i$.
Finally, we assign output weights $\w_{l+1,k} \in \Reals^{n_l}$ and output biases $b_{l+1,k} \in \Reals$ for $k = 1, \dots, p$.
Let us use the shorthand notation $\v_i := [v_{i1} \; \cdots \; v_{i n_i}]$.  The deep neural network is then given by
\begin{subequations}
    \label{eq:deep}
    \begin{align}
        \label{eq:deep-input}
        v_{1k} &:= \sigma(\w_{1k} \cdot \x + b_{1k}), \quad
        k = 1, \dots, n_1 \\
        \label{eq:deep-hidden}
        v_{ik} &:= \sigma(\w_{ik} \cdot \v_{i-1} + b_{ik}), \quad
        i = 2, \dots, l, \quad
        k = 1, \dots, n_i \\
        \label{eq:deep-output}
        \eta_k &:= \w_{l+1, k} \cdot \v_l + b_{l+1, k}, \quad
        k = 1, \dots, p.
    \end{align}
\end{subequations}

In \citet{ccr86994}, we showed that neural networks with rectified linear unit activations are linear splines---that is, they are piecewise linear functions with a finite number of pieces.
For $i = 1, \dots, l + 1$ and with $\v = \x$ or $\v_{i-1}$, every affine transformation $\w_{ik} \cdot \v + b_{ik}$ combines the knots in the scalar components of $\v$.
The application of $\sigma$ to $\w_{ik} \cdot \v + b_{ik}$ then retains knots at inputs $\x_j$ if $\w_{ik} \cdot \v(\x_j) + b_{ik} > 0$, and eliminates knots at inputs $\x_j$ if $\w_{ik} \cdot \v(\x_j) + b_{ik} < 0$.
Furthermore, wherever $\w_{ik} \cdot \v(\x) + b_{ik} = 0$, new knots are created at those values of $\x$.
Thus, there is a natural equivalence between knots in neural networks and roots of affine transformations: wherever a knot exists in a neural network, some neuron in some layer has a root there in its affine transformation.

Furthermore, as discussed in \citet{ccr86994}, the output dimension $p$ nominally has no effect on the number of knots in a neural network.
The output layer~(\ref{eq:shallow-output},~\ref{eq:deep-output}) does not contain rectified linear units, so all knots must be created in the hidden layers.
The only degenerate case that may arise is that any layer, including the output layer, may theoretically construct affine transformations such that $\v(\x)$ has a discontinuity in some gradient component, but $\w_{ik} \cdot \v(\x)$ does not.
We will not be concerned with this possibility in this paper, because such an occurrence almost surely will not happen if the weights and biases are randomly chosen from continuous distributions.

We will primarily consider the $\Reals \to \Reals$ single-layer neural network
\begin{equation}
    \label{eq:scalar-shallow}
    y(x) = \sum_{j=1}^n w_{2j} \sigma(w_{1j} x + b_{1j}) + b_2,
\end{equation}
which is obtained directly from~\eqref{eq:shallow} with a slight change in notation.
Such a simple model, admittedly, is not typically employed in practice; neural networks are commonly $\Reals^q \to \Reals^p$ functions in practice.
It does allow certain analyses, however, that would otherwise be far more difficult.
On one hand, our results apply immediately to output dimensions $p > 1$ for the aforementioned reasons.
On the other hand, the extension to input dimensions $q > 1$ introduces multidimensional polytopes, and the use of deep networks with $l > 1$ layers raises additional complexities that have not yet been considered.
The extension of our results to generic vector-valued deep neural networks remains a future objective.

\section{Empirical results}
\label{sec:empirical}

To motivate the discussion on random neural networks, we first show some examples of scalar-valued single-layer neural networks~\eqref{eq:scalar-shallow}.
\autoref{fig:example}
\begin{figure}[!t]
    \centering
    \includegraphics{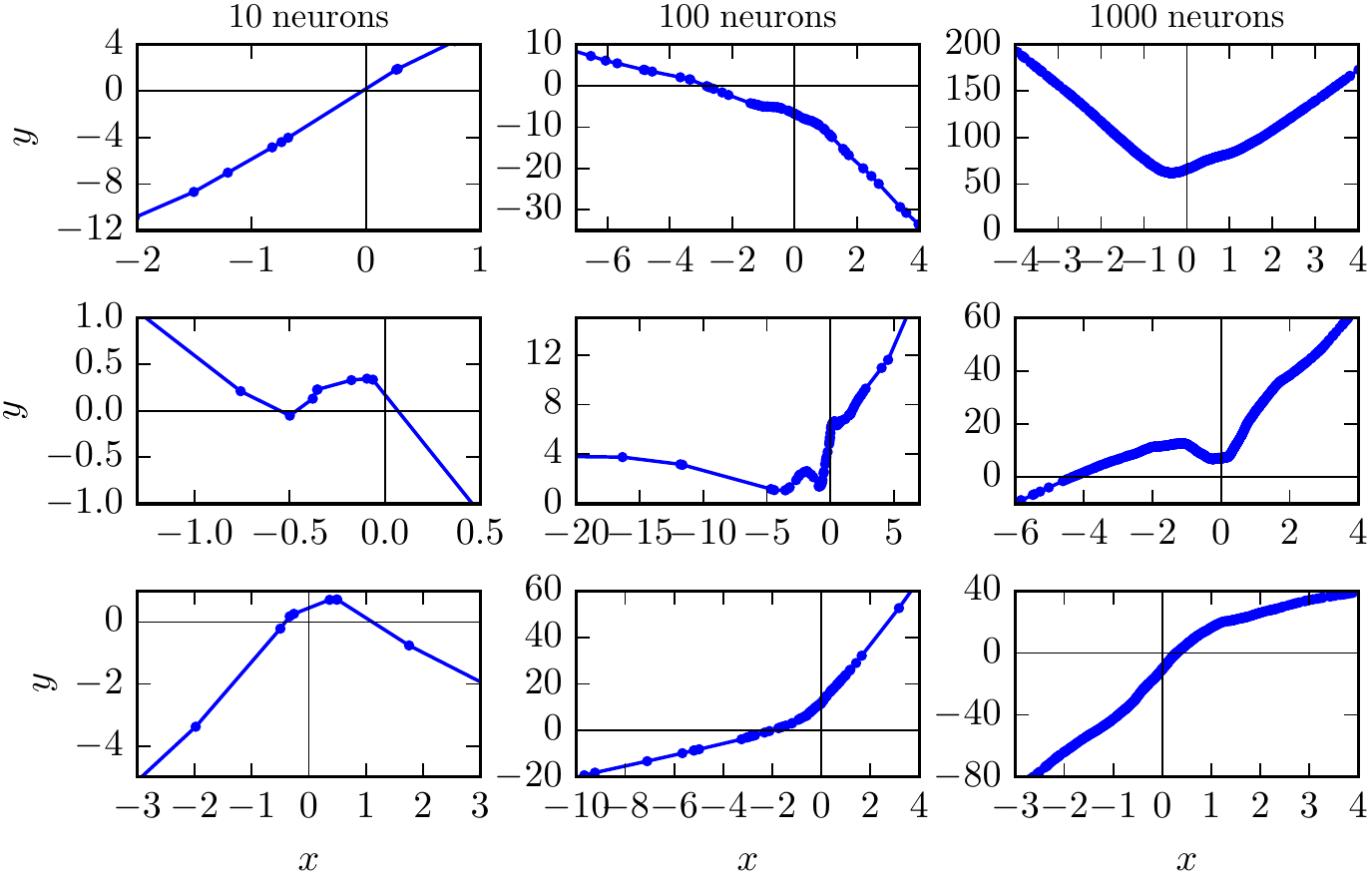}

    \caption{%
        Example random scalar-valued single-layer neural networks~\eqref{eq:scalar-shallow}, with $n = 10$ (left), 100 (middle), and 1,000 (right).
        Three examples for each size are shown in rows.
        Not all the knots are visible because the axes are scaled to highlight the behavior near the origin.%
    }
    \label{fig:example}
\end{figure}
shows three examples each with three different neural network widths $n$, where weights and biases are drawn independently from $N(0, 1)$.
As discussed in \citet{ccr86994}, the knot locations for this single-layer neural network model are simply $x_j = -b_{1j} / w_{1j}$ for $j = 1, \dots, n$.
Thus, with $n$ neurons in the single hidden layer, there are almost surely $n$ unique knots in the model.

With ten neurons in the random network (\autoref{fig:example}, left column), the linear spline is fairly coarse, as the model contains exactly ten knots.
With 100 and particularly with 1,000 neurons (\autoref{fig:example}, middle and right columns), however, the distribution of knots is somewhat dense.
In the latter case, the distribution is so dense that nearly every pixel in the right column plots contains a knot.
We observe that by and large, the random neural networks are qualitatively fairly smooth.
This topic will be related to integrated random walks and revisited in \autoref{sec:integrated-random-walk}.

Our chief numerical experiment, which motivates much of the remainder of this paper, is the following.
We consider $\Reals \to \Reals$ neural networks with $l = 1, \dots, 5$ hidden layers; in each, we consider a number of knots $n$ = 10, 20, 40, 60, 80, 100, 200 in every layer.
For every one of the 35 different neural network sizes, every scalar weight and bias was drawn independently from $U(-1, 1)$.
Then, for four such trials with this network size, the number of knots in the entire domain $x \in \Reals$ was counted by keeping track of the linear splines input into each neuron.

The number of knots $m$ in the resulting random neural networks is shown in \autoref{fig:experiment}.
\begin{figure}[!t]
    \centering
    \includegraphics{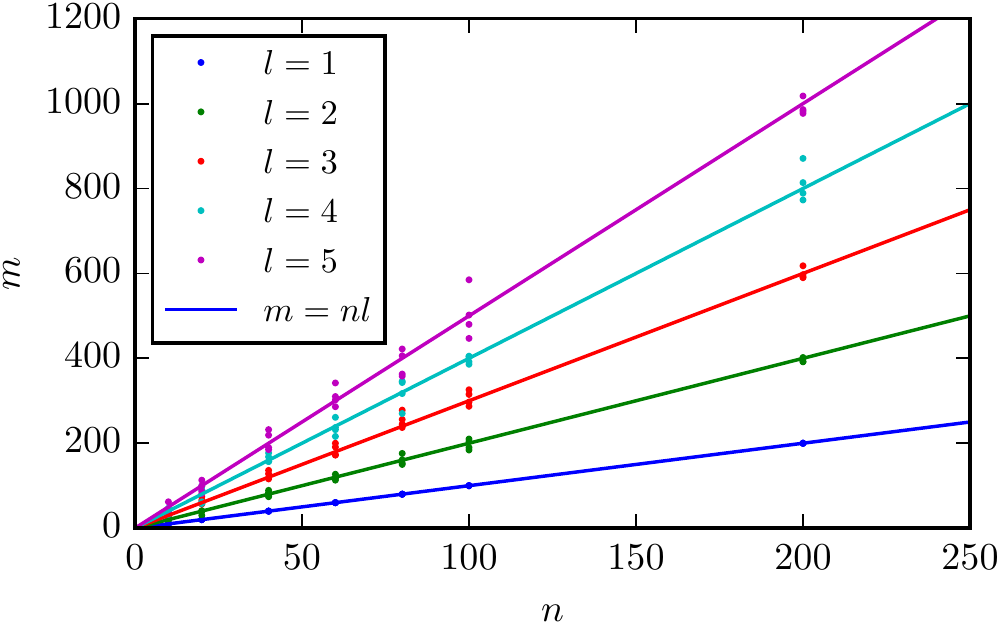}

    \caption{%
        The number of knots $m$ for different numbers of layers $l$ and neurons per layer $n$.
        Each dot shows a trial, and the lines $m = n l$ are also shown.%
    }
    \label{fig:experiment}
\end{figure}
The main empirical result is stated as follows.

\begin{conjecture}
    \label{conj:num-knots}

    In an $\Reals \to \Reals$ neural network with $l$ layers, $n$ neurons per layer, rectified linear unit activation functions in each neuron, and weights and biases drawn independently from $U(-1, 1)$, the number of knots in the input--output model is
    \begin{equation}
        \label{eq:num-knots}
        m \approx n l.
    \end{equation}
\end{conjecture}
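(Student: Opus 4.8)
The plan is to prove the expected count $m \approx nl$ by induction on the number of layers $l$, tracking at each stage the number of \emph{distinct} knot locations carried by the hidden-layer vector $\v_i$. Because the output~\eqref{eq:deep-output} contains no rectified linear units, the knots of the final model are exactly the union of the knots of the components of $\v_l$, so it suffices to control this count. The base case $l=1$ is already settled in \autoref{sec:empirical}: the knots of~\eqref{eq:scalar-shallow} lie at $x_j = -b_{1j}/w_{1j}$, which are almost surely distinct, giving exactly $n$ knots so that $m = n = nl$ holds exactly. All the difficulty lies in the inductive step, where I would show that layer $i$ adds on average $n$ \emph{new} knots --- one per neuron --- while the knots inherited from the earlier layers neither multiply nor, typically, disappear.

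For the inductive step, fix a neuron $k$ in layer $i$ and study its pre-activation $a_k(x) = \w_{ik} \cdot \v_{i-1}(x) + b_{ik}$. Each component of $\v_{i-1}$ is a linear spline, so $a_k$ is a linear spline whose breakpoints are the inherited knots $x_1 < x_2 < \cdots$. Ordering them, the slope of $a_k$ is piecewise constant and jumps at each $x_j$ by a random increment --- a weighted sum, with weights from $U(-1,1)$, of the slope changes of the components of $\v_{i-1}$ at $x_j$. Hence the successive slopes of $a_k$ form a random walk, and the successive values $a_k(x_j)$ are its running integral, with horizontal step sizes equal to the gaps $x_{j+1}-x_j$ and with initial value and slope fixed by $b_{ik}$ and by the behavior of $\v_{i-1}$ as $x \to -\infty$. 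By the knot--root equivalence of \autoref{sec:neural-nets}, the new knots created by neuron $k$ are precisely the zero crossings of $a_k$. This is the reduction advertised in \autoref{sec:integrated-random-walk}: the number of knots per neuron equals the number of zero crossings of a variable-step-size integrated random walk, and the target is that this expectation is $\approx 1$ \citep{SinaiTMP92, GroeneboomAP99, DenisovAIHPPS15}.

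Two auxiliary facts then close the induction. The $n$ new knots produced within layer $i$ lie at almost surely distinct locations, and distinct from the inherited knots, because all weights and biases come from a continuous distribution; so the new knots genuinely add. Moreover the inherited knots are rarely destroyed: a knot at $x_0$ persists into $\v_i$ as long as at least one of the $n$ neurons of layer $i$ has $a_k(x_0) > 0$, and since, conditional on $\v_{i-1}$, each neuron is active at $x_0$ independently and, by the symmetry of $U(-1,1)$, with probability $\tfrac{1}{2}$, an inherited knot is killed in every neuron only with probability $2^{-n}$. Combining the $\approx n$ new knots with the near-total survival of the $\approx (i-1)n$ old ones gives the additive recursion whose solution is $m \approx nl$. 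It is precisely this additivity --- new knots per neuron being $O(1)$ rather than growing with the number of inherited knots --- that collapses the worst-case multiplicative bound $n^l$ down to $nl$.

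The main obstacle is the central claim that the integrated random walk crosses zero only about once on average. A linear spline with $M$ breakpoints can in principle cross zero $\Theta(M)$ times --- exactly what the maximally sawtooth construction behind the $n^l$ bound exploits --- so the content of the conjecture is that a \emph{random} pre-activation is far more persistent, crossing zero $O(1)$ times despite having $M \approx (i-1)n$ breakpoints. The classical integrated-random-walk results govern homogeneous, fixed-step walks started at the origin, whereas here the step sizes are themselves random (the inherited knot gaps), the slope increments are not identically distributed from layer to layer, and the nonzero initial conditions set by the bias and the leading weights demonstrably affect the crossing count. Establishing that the expected number of crossings is $\approx 1$ uniformly in $n$, and confirming the apparent insensitivity of this value to the underlying weight distribution, is the delicate step I expect to dominate the work, and the reason a fully rigorous proof --- rather than the empirical and heuristic evidence assembled here --- remains out of reach.
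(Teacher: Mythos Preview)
Your proposal mirrors the paper's own program almost exactly: the survival-of-inherited-knots argument with failure probability $2^{-n}$ is the content of Lemma~\ref{lem:none-eliminated} and Corollaries~\ref{cor:none-eliminated-num-knots}--\ref{cor:half-chance}, the reduction of the per-neuron knot count to zero crossings of a variable-step integrated random walk is Theorem~\ref{thm:equivalence}, and the remaining gap you identify --- proving the expected crossing count is $\approx 1$ under the relevant random initial conditions and step sizes --- is precisely Conjecture~\ref{conj:one-root}, which the paper likewise leaves open. In short, your outline is correct and coincides with the paper's approach, and both stop at the same unresolved step; the statement remains a conjecture.
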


That is, the number of knots is approximately equal to the number of neurons.
For this limited data set, the mean of $(m - n l) / (n l)$---the error in~\eqref{eq:num-knots}, normalized by the number of neurons---is $8.
5\e{-4}$, and the standard deviation is $9.
4\e{-2}$.
Although~\eqref{eq:num-knots} is an extraordinarily simple observation, the proof of this conjecture has been surprisingly evasive.
The main objective from this point forward is to seek an analytical explanation for~\eqref{eq:num-knots}.
We do not rigorously prove this result in this paper, but we detail our progress and discuss some expository ideas that may lead to a proof.

We also remark that for an $\Reals \to \Reals^p$ neural network, the number of scalar weights and biases is approximately $(p + 2) n + (l - 1) n^2$ \cite{ccr86994}.
Although for $l \ge 3$, this is much less than the tight upper bound on the number of knots, which is approximately $n^l$ \cite{ccr86994}, it is still typically much greater than the $n l$ knots observed in random neural networks.

\section{Analysis of random neural networks}
\label{sec:analysis}

As a first step in understanding the behavior exhibited in \autoref{fig:experiment}, we consider in \autoref{sec:knots-roots} how knots are created and eliminated in each hidden layer.
Next, in \autoref{sec:forward-facing} we review a transformation that allows the neural network~\eqref{eq:scalar-shallow} to be written so that all rectified linear units face the positive $x$ direction.
Using this transformation, \autoref{sec:distributions} discusses the number of knots in random neural networks for different probability distributions.

\subsection{Relating knots and roots}
\label{sec:knots-roots}

It was shown in \cite{ccr86994} and \autoref{sec:neural-nets} that each neuron can retain, eliminate, or create knots, depending on the value of the affine transformation $\w_{ik} \cdot \v_{i-1}(x) + b_{ik}$ to which the rectified linear unit is applied.
Regarding the collective behavior of neurons, one can further show the following.

\begin{lemma}
    \label{lem:none-eliminated}

    Consider a deep neural network with weights and biases independently selected from continuous distributions that are symmetric about zero.
    For hidden layer $i = 2, \dots, l$, in the limit that $n_i \to \infty$, every knot of $\v_{i-1}(x)$ is almost surely also a knot of $\v_i(x)$.
\end{lemma}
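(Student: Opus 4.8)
The plan is to reduce the statement to a per-knot survival computation and then take a union bound over the finitely many knots of $\v_{i-1}$. I would first condition on a realization of layers $1,\dots,i-1$, which is independent of the weights $\w_{ik}$ and biases $b_{ik}$ of layer $i$. Relative to this conditioning the knot set of $\v_{i-1}(x)$ is a fixed finite set, say $\{x_1,\dots,x_{m_{i-1}}\}$, and each $c_m := v_{i-1,m}(x_\ell)\ge 0$ is a constant. As reviewed in \autoref{sec:neural-nets}, a knot $x_\ell$ of $\v_{i-1}$ survives in neuron $k$ of layer $i$ precisely when the affine image $g_k(x) := \w_{ik}\cdot\v_{i-1}(x) + b_{ik}$ both retains a genuine kink at $x_\ell$ and satisfies $g_k(x_\ell) > 0$, so that $\sigma$ acts as the identity on a neighborhood of $x_\ell$.

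Next I would dispose of the degenerate configurations. The kink of $g_k$ at $x_\ell$ is destroyed only if $\sum_m w_{ikm}\Delta_{\ell m} = 0$, where $\Delta_{\ell m}$ is the slope change of $v_{i-1,m}$ at $x_\ell$; since at least one $\Delta_{\ell m}\neq 0$, this is a single nontrivial linear constraint on the continuously distributed weights and hence has probability zero. Likewise $g_k(x_\ell)=0$ occurs with probability zero. Thus, up to a null event, $x_\ell$ fails to survive in neuron $k$ exactly when $g_k(x_\ell) < 0$.

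The key step is to evaluate $P\bigl(g_k(x_\ell) < 0\bigr)$. Writing $g_k(x_\ell) = \sum_m w_{ikm} c_m + b_{ik}$, each summand $w_{ikm}c_m$ is symmetric about zero (a constant times a symmetric weight), and $b_{ik}$ is symmetric about zero; being an independent sum of continuous symmetric variables, $g_k(x_\ell)$ is continuous and symmetric about zero, so $P\bigl(g_k(x_\ell)<0\bigr)=\tfrac12$. The bias term alone guarantees nondegeneracy even if every $c_m = 0$. Because the neurons of layer $i$ use independently drawn parameters, the events $\{g_k(x_\ell)<0\}$ are independent across $k$, and $x_\ell$ is eliminated from $\v_i$ only if it is eliminated from all $n_i$ neurons; hence $P(x_\ell \text{ eliminated}) = 2^{-n_i}$. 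A union bound over the $m_{i-1}$ knots gives $P(\text{some knot of }\v_{i-1}\text{ is lost}) \le m_{i-1}\,2^{-n_i} \to 0$ as $n_i\to\infty$. Since adding neurons only enlarges the knot set, these elimination events are nested, so the conclusion holds almost surely in the limit, and averaging over the almost surely finite previous layers removes the conditioning.

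I expect the only real obstacle to be bookkeeping the measure-zero exceptional sets rigorously --- in particular, arguing that kink cancellation and the exact roots $g_k(x_\ell)=0$ do not interfere with the independence used in the product bound --- rather than any substantive analytic difficulty. The heart of the argument is simply that symmetry forces each neuron to kill a given knot with probability exactly one half, so that exponentially many neurons would be needed to remove it.
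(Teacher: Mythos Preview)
Your proposal is correct and follows essentially the same approach as the paper: symmetry of the distributions gives a per-neuron elimination probability of exactly $1/2$, independence across neurons yields $2^{-n_i}$, and a bound over the finitely many knots finishes the argument. The only cosmetic difference is that the paper writes the survival probability as $(1-2^{-n_i})^{m_{i-1}}$ rather than your union bound $m_{i-1}\,2^{-n_i}$; your version is arguably cleaner since it does not implicitly assume independence of the elimination events across different knots.
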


\begin{proof}
    The sketch of the proof is fairly simple.
    The output of neuron $k$ in hidden layer $i$ is given by~\eqref{eq:deep-hidden}.
    If the weights and biases are selected independently from continuous distributions that are symmetric about zero, then for each knot $x_j$ among the elements of $\v_{i-1}$, there is no preference in the sign of $\w_{ik} \cdot \v_{i-1}(x_j) + b_{ik}$, besides that it is almost surely not zero.
    Furthermore, there is a zero probability that this affine transformation will degenerately remove discontinuities, such that $\v_{i-1}$ has a knot at some $x_j$ but $\w_{ik} \cdot \v_{i-1}$ does not.
    Therefore, for each knot $x_j$, the probability that $\w_{ik} \cdot \v_{i-1}(x_j) + b_{ik} < 0$---and thus the knot $x_j$ is eliminated by the application of a rectified linear unit---is exactly $1 / 2$.

    Since each neuron's weights and biases are chosen independently, the probability that a knot is eliminated by a neuron's rectified linear unit is independent of all other neurons.
    Note that for a knot to be absent among the outputs of layer $i$, it must be eliminated by every neuron in that layer.
    The probability of such an occurrence is $2^{-n_i}$.
    Finally, if there are $m_{i-1}$ knots in $\v_{i-1}$, then the probability that all $m_{i-1}$ are retained in $\v_i$ is \begin{equation} P_\text{preserved} = (1 - 2^{-n_i})^{m_{i-1}}, \end{equation} which approaches unity as $n_i \to \infty$.
\end{proof}

\begin{corollary}
    \label{cor:none-eliminated-num-knots}
    Supposing that $n_i = n$ for all $i$, Lemma~\ref{lem:none-eliminated} holds even if $m_{i-1} = n (i - 1)$, as is empirically observed.
\end{corollary}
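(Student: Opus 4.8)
The plan is to treat this corollary as a quantitative robustness check on the probability bound already established in the proof of \autoref{lem:none-eliminated}, rather than as a genuinely new result. That proof shows that, for a \emph{fixed} number $m_{i-1}$ of knots in $\v_{i-1}$, the probability that all of them survive into $\v_i$ is $P_\text{preserved} = (1 - 2^{-n_i})^{m_{i-1}}$, and then sends $n_i \to \infty$ with $m_{i-1}$ held constant. The only additional content here is that $m_{i-1}$ is no longer fixed but grows with the common width $n$ according to the empirically observed value $m_{i-1} = n(i-1)$, so I must verify that the base tending to $1$ still outpaces the exponent tending to infinity.

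Concretely, I would substitute $n_i = n$ and $m_{i-1} = n(i-1)$ into the preservation probability to obtain $P_\text{preserved} = (1 - 2^{-n})^{n(i-1)}$, and then bound its logarithm $\log P_\text{preserved} = n(i-1)\log(1 - 2^{-n})$ from below. Using the elementary inequality $\log(1-x) \ge -x/(1-x)$ with $x = 2^{-n}$ yields $\log P_\text{preserved} \ge -n(i-1)\,2^{-n}/(1 - 2^{-n})$. Since $i$ is a fixed layer index, the right-hand side is a polynomial in $n$ multiplied by an exponentially decaying factor, and so tends to $0$ as $n \to \infty$. Because $\log P_\text{preserved} \le 0$ throughout, a squeeze then gives $\log P_\text{preserved} \to 0$ and hence $P_\text{preserved} \to 1$, which is exactly the conclusion of \autoref{lem:none-eliminated} under the stated growth of the knot count.

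I do not expect any genuine obstacle in this argument; the crux is simply the elementary fact that the exponential decay of $2^{-n}$ dominates the linear growth $n(i-1)$, so widening each layer preserves knots faster than the accumulating knots can escape. The only points that warrant a sentence of care are that the independence structure underlying the product formula $P_\text{preserved} = (1 - 2^{-n_i})^{m_{i-1}}$ is unaffected by allowing $m_{i-1}$ to depend on $n$, so the bound transfers verbatim, and that the statement is made for each fixed layer $i$: the argument tolerates any knot count growing subexponentially in $n$, and $n(i-1)$ with $i \le l$ fixed comfortably qualifies.
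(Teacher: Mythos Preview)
Your proposal is correct and follows essentially the same approach as the paper: both arguments take the logarithm of $P_\text{preserved} = (1-2^{-n})^{n(i-1)}$ and show that $n(i-1)\ln(1-2^{-n}) \to 0$ because the exponential decay of $2^{-n}$ dominates the linear growth in $n$. The only cosmetic difference is that the paper performs a change of variables $\epsilon = 2^{-n}$ and Taylor-expands $\ln(1-\epsilon)$, whereas you use the elementary bound $\ln(1-x) \ge -x/(1-x)$ together with a squeeze; your route is arguably a touch cleaner, but the substance is identical.
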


\begin{proof}
    In this case, the probability that all layer $i - 1$ knots are preserved in the layer $i$ outputs, for $n \to \infty$, is
    \begin{subequations}
        \begin{align}
            \lim_{n \to \infty} P_\text{preserved}
            &= \lim_{n \to \infty} (1 - 2^{-n})^{n (i - 1)} \\
            &= \lim_{n \to \infty} \exp(n (i - 1) \ln(1 - 2^{-n})).
        \end{align}
    \end{subequations}
    Using the change of variables given by $\epsilon = 2^{-n}$,
    \begin{equation}
        \lim_{n \to \infty} P_\text{preserved} = \lim_{\epsilon \to 0} \exp(-(i - 1) (\log_2 \epsilon) \ln(1 - \epsilon)).
    \end{equation}
    Expanding $\ln(1 - \epsilon)$ in a Taylor series,
    \begin{subequations}
        \begin{align}
            \lim_{n \to \infty} P_\text{preserved} &= \lim_{\epsilon
                \to 0} \exp\left((i - 1) (\log_2 \epsilon)
                \left(\epsilon + \O\left(\epsilon^2\right)\right)\right) \\
            &= 1.
        \end{align}
    \end{subequations}
\end{proof}

\begin{corollary}
    \label{cor:half-chance}

    With probability greater than $1 / 2$, all layer $i - 1$ knots are preserved in the layer $i$ outputs if
    \begin{equation}
        \label{eq:half-chance-series}
        n_i > \log_2 m_{i-1} - \log_2 \ln 2 + \O\left(m_{i-1}^{-1}\right).
    \end{equation}
    For the conditions in Corollary~\ref{cor:none-eliminated-num-knots}, the equivalent inequality is
    \begin{equation}
        \label{eq:half-chance-series-num-knots}
        n - \log_2 n + \O(n^{-1}) > \log_2(i - 1) - \log_2 \ln 2 + \O((i - 1)^{-1})
    \end{equation}
\end{corollary}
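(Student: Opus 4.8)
The plan is to start directly from the expression $P_\text{preserved} = (1 - 2^{-n_i})^{m_{i-1}}$ established in Lemma~\ref{lem:none-eliminated}, and to determine when it exceeds $1/2$. Imposing $(1 - 2^{-n_i})^{m_{i-1}} > 1/2$ and taking logarithms converts the condition into $m_{i-1} \ln(1 - 2^{-n_i}) > -\ln 2$. Setting $\epsilon = 2^{-n_i}$, which is small in the regime of interest, I would then Taylor-expand $\ln(1 - \epsilon) = -\epsilon + \O(\epsilon^2)$, exactly as in the proof of Corollary~\ref{cor:none-eliminated-num-knots}, reducing the inequality to $m_{i-1}\, 2^{-n_i} < \ln 2 + \O(m_{i-1}\, 2^{-2 n_i})$.

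Keeping the leading term gives $2^{-n_i} < (\ln 2) / m_{i-1}$, and taking $\log_2$ of both sides yields $n_i > \log_2 m_{i-1} - \log_2 \ln 2$. To pin down the stated correction term, I would solve the boundary equation $m_{i-1}\ln(1-\epsilon) = -\ln 2$ exactly, giving $\epsilon = 1 - 2^{-1/m_{i-1}}$, and expand $2^{-1/m_{i-1}} = 1 - (\ln 2)/m_{i-1} + \O(m_{i-1}^{-2})$. Substituting $n_i = -\log_2 \epsilon$ and expanding the logarithm once more shows that the threshold value of $n_i$ is $\log_2 m_{i-1} - \log_2 \ln 2 + \O(m_{i-1}^{-1})$, which is precisely~\eqref{eq:half-chance-series}; the inequality $P_\text{preserved} > 1/2$ then holds whenever $n_i$ lies above this threshold.

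For the second inequality, I would specialize to the setting of Corollary~\ref{cor:none-eliminated-num-knots}, substituting $n_i = n$ and $m_{i-1} = n(i-1)$ and using $\log_2(n(i-1)) = \log_2 n + \log_2(i-1)$. Collecting the $n$-dependent terms on the left gives $n - \log_2 n > \log_2(i-1) - \log_2\ln 2$, up to a combined remainder of order $(n(i-1))^{-1}$; since this error is bounded by both $\O(n^{-1})$ and $\O((i-1)^{-1})$, it can be attributed to the two sides as written in~\eqref{eq:half-chance-series-num-knots}. Because $n$ now appears on both sides, this is a transcendental condition in $n$ that I would not try to invert; the point is only to exhibit that the ``supply'' $n - \log_2 n$ grows essentially linearly while the ``demand'' $\log_2(i-1)$ grows only logarithmically, so the inequality is comfortably satisfied for moderate widths.

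The main obstacle I anticipate is careful bookkeeping of the asymptotic error terms rather than any conceptual difficulty: in particular, verifying that the correction to the threshold for $n_i$ genuinely collapses to $\O(m_{i-1}^{-1})$ after the nested logarithmic expansions, and justifying the somewhat informal splitting of the single $\O((n(i-1))^{-1})$ remainder across the two sides of the final inequality. Neither step is deep, but both require enough care that the stated $\O$ forms are honestly controlled rather than merely leading-order heuristics.
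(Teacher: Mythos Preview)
Your proposal is correct and follows essentially the same route as the paper: solve $P_\text{preserved} > 1/2$ to obtain the exact threshold $n_i > -\log_2(1 - 2^{-1/m_{i-1}})$, expand this for large $m_{i-1}$ to get~\eqref{eq:half-chance-series}, and then substitute $m_{i-1} = n(i-1)$ for~\eqref{eq:half-chance-series-num-knots}. Your write-up is actually more explicit about the Taylor-expansion bookkeeping than the paper, which simply asserts the series form without detailing the intermediate steps.
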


\begin{proof}
    By solving $P_\text{preserved} > 1 / 2$ for $n_i$, we obtain
    \begin{equation}
        \label{eq:half-chance}
        n_i > -\log_2 (1 - 2^{-1 / m_{i-1}}).
    \end{equation}
    Equation~\eqref{eq:half-chance-series} is then obtained by computing the series solution for~\eqref{eq:half-chance} with large $m_{i-1}$.
    Equation~\eqref{eq:half-chance-series-num-knots} is then derived from~\eqref{eq:half-chance-series} by setting $m_{i-1} = n (i - 1)$.
\end{proof}

Lemma~\ref{lem:none-eliminated} and the corollaries have an important implication in the analysis of knots in random neural networks.
Essentially, if $n_i$ is large for all $i = 2, \dots, l$---which is typically the case---then no knots are expected to be eliminated.
In fact, Corollary~\ref{cor:half-chance} shows that $n_i$ does not even need to be that large: it only needs to scale logarithmically by $m_{i-1}$ or $i - 1$.
Thus, for decently sized $n_i$, the number of knots in the input--output relation of the neural network is equal to the number of knots created by any neuron in the entire architecture.
That is, not only do we empirically expect that random neural networks have $n l$ knots, but also, we expect that $n l$ knots are \emph{created} among all neurons.

To proceed further with our analysis, it is now natural to shift our focus from the entire neural network to the individual neurons within.
As previously mentioned, with $l = 1$ hidden layer,~\eqref{eq:num-knots} is met almost surely: each neuron creates exactly one knot, and the probability that two neurons would create knots at the same location $x_j$ is zero for continuous distributions.
Yet, even with incrementally increasing $l$, empirical evidence still shows that~\eqref{eq:num-knots} is true to close approximation, if not exactly.
Thus, we can use induction to posit the following stronger statement.

\begin{conjecture}
    For the conditions in Lemma~\ref{lem:none-eliminated}, the expected number of knots created by every neuron $k = 1, \dots, n_i$ in every layer $i = 1, \dots, l$ is unity.
\end{conjecture}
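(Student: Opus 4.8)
The plan is to translate the statement into a zero-crossing problem and then attack it by induction on the layer index, reducing at each step to counting sign changes of a single neuron's affine transformation. By the knot--root correspondence reviewed in \autoref{sec:neural-nets} and \autoref{sec:knots-roots}, the knots \emph{created} (as opposed to retained) by neuron $(i,k)$ are precisely the points where the piecewise-linear map
\begin{equation}
    g_{ik}(x) := \w_{ik} \cdot \v_{i-1}(x) + b_{ik}
\end{equation}
changes sign, since $\sigma(g_{ik})$ introduces a new kink exactly at each genuine zero crossing of $g_{ik}$ (the probability of a tangential, non-crossing zero is zero for continuous distributions). Thus the goal becomes to show $\mathbb{E}[Z_{ik}] = 1$, where $Z_{ik}$ is the number of zero crossings of $g_{ik}$. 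Since, conditional on $\v_{i-1}$, the neurons $k = 1, \dots, n_i$ use independent and identically distributed weights and biases and see the same input $\v_{i-1}$, the $Z_{ik}$ are exchangeable; hence the per-neuron claim is equivalent to showing that layer $i$ creates $n_i$ knots in expectation, and it suffices to analyze a single $g_{ik}$.

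First I would set up the induction. For the base case $i = 1$, the map $g_{1k}(x) = w_{1k} x + b_{1k}$ is a non-constant affine function almost surely, so $Z_{1k} = 1$ deterministically. For the inductive step I assume the claim through layer $i-1$, so that $\v_{i-1}$ has $m_{i-1}$ knots with $\mathbb{E}[m_{i-1}] = \sum_{i' < i} n_{i'}$, all of which survive into layer $i$ by \autoref{lem:none-eliminated}. On each of the $m_{i-1} + 1$ linear pieces of $g_{ik}$ the function is affine, so it crosses zero at most once: a crossing occurs on a bounded piece $(x_j, x_{j+1})$ iff $\sgn g_{ik}(x_j) \neq \sgn g_{ik}(x_{j+1})$, and on each semi-infinite end piece iff the sign at the extreme knot disagrees with the sign of the asymptotic slope there. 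Taking expectations,
\begin{equation}
    \mathbb{E}[Z_{ik}] = \mathbb{E}\!\left[\sum_{j=1}^{m_{i-1}-1} P\!\left(\sgn g_{ik}(x_j) \neq \sgn g_{ik}(x_{j+1}) \,\middle|\, \v_{i-1}\right)\right] + (\text{two boundary terms}),
\end{equation}
so the problem reduces to evaluating a sum of consecutive sign-change probabilities together with the two tail contributions.

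The key structural observation --- and the bridge to \autoref{sec:integrated-random-walk} --- is that the slope process $g_{ik}'(x)$ is piecewise constant and changes only at the knots of $\v_{i-1}$, where its increment is a sum of independent terms of the form (weight) $\times$ (slope jump of a component of $\v_{i-1}$). Because the weights are symmetric about zero, each slope increment is symmetric about zero, so $g_{ik}'$ is a random walk with symmetric, variable-size steps and $g_{ik}$ is its running integral: a variable-step integrated random walk. The values $g_{ik}(x_1), g_{ik}(x_2), \dots$ at the knots are then the successive states of this walk, their strong consecutive positive correlation is what makes sign changes rare, and $Z_{ik}$ is exactly the number of sign changes of this sequence augmented by the signs at $\pm\infty$. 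I would evaluate the sum either by conditioning on $\v_{i-1}$ and using the joint law of $(g_{ik}(x_j), g_{ik}(x_{j+1}))$ --- for instance, the arccosine law for the sign of a symmetric bivariate pair in the Gaussian case, or a Kac--Rice-type crossing density in general --- or by seeking a distribution-free Sparre-Andersen-type cancellation that collapses the sum to $1$.

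The hard part, I expect, is precisely this evaluation. The sum of consecutive sign-change probabilities for an integrated random walk is a persistence-type functional, and the zero-crossing and persistence statistics of integrated random walks are exactly the delicate, distribution-sensitive objects studied in \citep{SinaiTMP92, GroeneboomAP99, DenisovAIHPPS15}; controlling both the correlations among the interior terms and the two boundary contributions, and showing that they conspire to give exactly $1$ rather than merely a finite constant, is where the real difficulty lies. The variable step sizes (random knot spacings) and the initial conditions of the walk fixed by the asymptotic slopes of $\v_{i-1}$ both enter here, which is why the subsequent sections study those two features in detail. If an exact identity proves out of reach, I would instead aim to sandwich $\mathbb{E}[Z_{ik}]$ between bounds of the form $1 + o(1)$ as $n_i \to \infty$, which already matches the ``$\approx$'' in \autoref{conj:num-knots} and the small normalized error reported empirically.
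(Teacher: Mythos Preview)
This statement is a \emph{conjecture} in the paper and is explicitly left unproved: the paper's own contribution is to reduce it to the one-root claim for each neuron's affine input (Conjecture~\ref{conj:one-root}), recast that as a zero-crossing problem for an integrated random walk with variable step size (\autoref{thm:equivalence}), and then explore --- but not resolve --- the crossing statistics in \autoref{sec:integrated-random-walk}. Your proposal follows exactly this path: you reduce to sign changes of $g_{ik}$, recognize the slope process as a random walk and $g_{ik}$ as its running integral, and flag the persistence/crossing evaluation as the hard step, citing the same literature \citep{SinaiTMP92,GroeneboomAP99,DenisovAIHPPS15}. You have correctly rediscovered the paper's framework and correctly located the obstacle it runs into; your fallback to a $1 + o(1)$ bound is likewise in the spirit of the paper's ``$\approx$'' in Conjecture~\ref{conj:num-knots}.

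One place your outline is more ambitious than the paper, and where the argument would not close as written: the induction on the layer index $i$ needs much more from layer $i-1$ than the knot \emph{count} $\mathbb{E}[m_{i-1}]$. To evaluate the sign-change probabilities you need the joint law of the knot spacings and slope jumps of $\v_{i-1}$. For $i = 2$ this is tractable: each knot of $\v_1$ belongs to a single component, so the slope increment of $g_{2k}$ at the $j$th knot involves only the $j$th coordinate of $\w_{2k}$, and the increments are genuinely i.i.d.\ --- this is precisely the setting of \autoref{thm:equivalence}, and the paper deliberately restricts to it (``we will focus specifically on the affine transformation in~\eqref{eq:scalar-shallow} \ldots\ Analyses for further hidden layers may be pursued in future research''). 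For $i \ge 3$, a knot of $\v_{i-1}$ is typically shared by many components (those neurons in layer $i-1$ that retained it), so the slope jump of $\v_{i-1}$ there is a vector with many nonzero entries, and the increment $\w_{ik} \cdot (\text{jump})$ reuses overlapping coordinates of $\w_{ik}$ across different knots. The increments remain symmetric about zero, as you say, but they are no longer independent, so $g_{ik}$ is not an integrated random walk in the sense needed for Sparre-Andersen or the cited persistence results. Your induction therefore does not propagate beyond $i = 2$ without a substantially new argument handling this dependence.
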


From this conjecture, another immediately follows from the construction of the neuron.

\begin{conjecture}
    \label{conj:one-root}

    For the conditions in Lemma~\ref{lem:none-eliminated}, the expected number of roots of the affine transformation $\w_{ik} \cdot \v_{i-1}(x) + b_{ik}$, in every neuron $k = 1, \dots, n_i$ in every layer $i = 1, \dots, l$ (with $x$ in place of $\v_{i-1}(x)$ for $i = 1$), is unity.
\end{conjecture}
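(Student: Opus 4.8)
The plan is to reduce Conjecture~\ref{conj:one-root} to the immediately preceding conjecture by exploiting the knot--root equivalence established in \autoref{sec:neural-nets}, and to dispose of the base case $i = 1$ directly. For $i = 1$, the affine transformation is $w_{1k} x + b_{1k}$, which is a genuine (non-degenerate) line whenever $w_{1k} \ne 0$; since the weights are drawn from a continuous distribution, $w_{1k} \ne 0$ almost surely, so this line has exactly one root. Hence the number of roots is almost surely $1$, and its expectation is trivially unity. This anchors the statement and shows that all the content lies in the layers $i \ge 2$.

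For $i \ge 2$, I would argue realization-wise that the number of roots of $g_{ik}(x) := \w_{ik} \cdot \v_{i-1}(x) + b_{ik}$ equals the number of knots that neuron $(i,k)$ creates. The function $g_{ik}$ is a continuous linear spline in $x$, so on each linear piece between consecutive knots of $\v_{i-1}$ it is affine; applying $\sigma$ introduces a new kink precisely where $g_{ik}$ changes sign, and introduces no new kink where $g_{ik}$ does not change sign. Because the weights and biases come from continuous distributions, almost surely $g_{ik}$ has no tangential zero (a zero without a sign change) and no zero coinciding with a pre-existing knot of $\v_{i-1}$; these degenerate events each have probability zero. Consequently there is an exact bijection, on almost every realization, between the roots of $g_{ik}$ and the knots newly created by the neuron, so the two counts agree pointwise and therefore in expectation.

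Combining the two observations, the expected number of roots of the affine transformation in neuron $(i,k)$ equals the expected number of knots it creates. Invoking the preceding conjecture, which asserts that this latter expectation is unity for every neuron in every layer, yields the claim. I would emphasize that this reduction is the entire substance of Conjecture~\ref{conj:one-root}: once the companion statement about created knots is granted, the root count follows by construction of the rectified linear unit.

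The hard part is therefore not Conjecture~\ref{conj:one-root} itself but the conjecture it rests on, namely that each neuron creates exactly one knot on average for $i \ge 2$. If one instead attempted a self-contained proof of the expected root count, the obstacle surfaces immediately: on each linear piece $g_{ik}$ crosses zero if and only if its values at the two bounding knots have opposite signs, so the number of roots equals the number of sign changes in the ordered sequence $\{g_{ik}(x_j)\}_j$ together with the two semi-infinite end pieces. Computing the expectation of this count requires understanding the joint law of a random linear combination of the correlated spline components $\v_{i-1}$ evaluated along the knot sequence---precisely the integrated-random-walk structure developed in \autoref{sec:integrated-random-walk}, where the expected number of zero crossings, and its delicate dependence on step sizes and initial conditions, is the genuinely difficult quantity to control.
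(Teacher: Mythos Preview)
Your proposal is correct and follows the same route as the paper: the paper's entire justification for Conjecture~\ref{conj:one-root} is the single sentence ``From this conjecture, another immediately follows from the construction of the neuron,'' and you have simply spelled out that construction (the almost-sure bijection between sign-change roots of $g_{ik}$ and newly created knots, plus the trivial $i=1$ case). Your added remarks about degenerate zeros and the integrated-random-walk obstruction are accurate and consistent with the paper, but the core reduction is identical.
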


From this point on, our focus will lie primarily on Conjecture~\ref{conj:one-root} and the conditions for which it appears to be true.
Additionally, we will focus specifically on the affine transformation in~\eqref{eq:scalar-shallow}, which can be interpreted as the input into any neuron in the second hidden layer.
Analyses for further hidden layers may be pursued in future research.

\subsection{Equivalent form with forward-facing rectified linear units}
\label{sec:forward-facing}

Consider the single-layer scalar-valued neural network in~\eqref{eq:scalar-shallow} (e.g., \autoref{fig:relu_decomposition}(a)).
\begin{figure}[!t]
    \centering
    \begin{tikzpicture}
        \node at (-4.7, 0) {\includegraphics[scale=0.9]{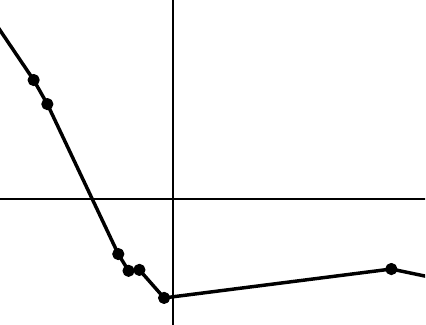}};
        \node at (0, 0) {\includegraphics[scale=0.9]{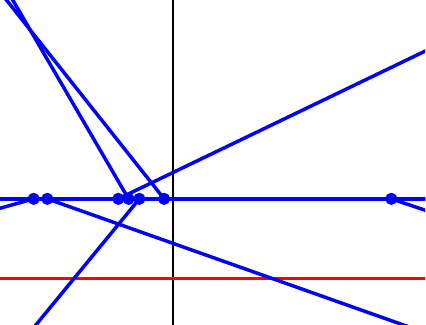}};
        \node at (4.7, 0) {\includegraphics[scale=0.9]{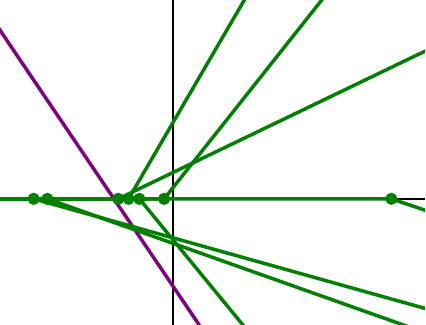}};

        \node at (-7.2, 1.5) [subfig] {(a)};
        \node at (-2.5, 1.5) [subfig] {(b)};
        \node at (2.2, 1.5) [subfig] {(c)};
    \end{tikzpicture}

    \caption{%
        (a)~A random single-layer neural network
        model~\eqref{eq:scalar-shallow}.
        (b)~The decomposition of (a) into individual rectified linear
        units (blue) and the bias $b_2$ (red).
        (c)~The equivalent model~\eqref{eq:forward-relu} with
        forward-facing rectified linear units (green) and a line
        (purple).%
    }
    \label{fig:relu_decomposition}
\end{figure}
As discussed in \cite{ccr86994}, the activated ray in the rectified linear units---that is, the part of $\sigma(x)$ for $x \ge 0$---can be oriented toward any of the four quadrants of the $x$--$v_k$ plane (e.g., \autoref{fig:relu_decomposition}(b)).
For the purpose of analyzing the behavior of~\eqref{eq:scalar-shallow}, it is useful to have all rectified linear units activate in quadrants I or IV (e.g., \autoref{fig:relu_decomposition}(c)), so that rectified linear units can be successively turned on as $x$ is traced from $-\infty$ to $\infty$, and no rectified linear units are turned off.
The transformation that expresses the rectified linear units this way is presented as Lemma~1 of \cite{ccr86994}, which we restate here with slight modifications.

\begin{lemma}
    \label{lem:forward-relu}

    The neural network model~\eqref{eq:scalar-shallow} is equivalently
    \begin{equation}
        \label{eq:forward-relu}
        y(x) = \sum_{j=1}^n s_j \sigma(x - x_j) + c_1 x + c_0,
    \end{equation}
    where
    \begin{subequations}
        \label{eq:forward-relu-parameters}
        \begin{align}
            \label{eq:ci}
            c_1 &:= \sum_{\stack{1 \le j \le n}{w_{1j} < 0}} w_{2j} w_{1j}, &
            c_0 &:= \sum_{\stack{1 \le j \le n}{w_{1j} < 0}} w_{2j} b_{1j} + b_2, & \\
            \label{eq:xj}
            s_j &:= w_{2j} |w_{1j}|, &
            x_j &:= -\frac{b_{1j}}{w_{1j}}, &
            j &= 1, \dots, n.
        \end{align}
    \end{subequations}
    All rectified linear units in~\eqref{eq:forward-relu} face the positive $x$ direction (or ``forward'').
\end{lemma}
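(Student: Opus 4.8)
The plan is to prove the identity term by term, rewriting each summand $w_{2j}\sigma(w_{1j}x + b_{1j})$ of~\eqref{eq:scalar-shallow} in forward-facing form and then collecting the leftover affine contributions into $c_1 x + c_0$. The natural case split is on the sign of the inner weight $w_{1j}$, which determines whether the activated ray of the $j$th rectified linear unit already points in the positive $x$ direction. The degenerate case $w_{1j} = 0$ reduces each affected term to the constant $w_{2j}\sigma(b_{1j})$ and leaves $x_j$ undefined; since it occurs with probability zero under the continuous distributions of interest, I would exclude it. In all remaining cases I would use the root location $x_j = -b_{1j}/w_{1j}$, so that $w_{1j}x + b_{1j} = w_{1j}(x - x_j)$.

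When $w_{1j} > 0$, positive homogeneity of $\sigma$ gives $\sigma(w_{1j}(x - x_j)) = w_{1j}\,\sigma(x - x_j)$, so $w_{2j}\sigma(w_{1j}x + b_{1j}) = s_j\,\sigma(x - x_j)$ with $s_j = w_{2j}|w_{1j}|$ and no affine remainder. The key step is the case $w_{1j} < 0$, where the activated ray points backward; here I would apply the reflection identity $\sigma(z) = z + \sigma(-z)$, valid for all real $z$, to reorient it. Writing $\sigma(w_{1j}x + b_{1j}) = (w_{1j}x + b_{1j}) + \sigma(|w_{1j}|(x - x_j))$ and using $\sigma(|w_{1j}|(x - x_j)) = |w_{1j}|\,\sigma(x - x_j)$, multiplication by $w_{2j}$ yields $s_j\,\sigma(x - x_j) + w_{2j}w_{1j}x + w_{2j}b_{1j}$, again with $s_j = w_{2j}|w_{1j}|$ but now with a linear correction.

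Summing over $j$ and adding $b_2$ then finishes the argument: every index contributes a forward-facing unit $s_j\,\sigma(x - x_j)$, while only the indices with $w_{1j} < 0$ contribute the linear and constant corrections. Grouping these gives the slope $c_1 = \sum_{w_{1j} < 0} w_{2j}w_{1j}$ and the intercept $c_0 = \sum_{w_{1j} < 0} w_{2j}b_{1j} + b_2$, exactly as in~\eqref{eq:forward-relu-parameters}. I do not expect a substantive obstacle, since the computation is purely algebraic; the only points requiring care are recognizing the reflection identity as the right device for reorienting the backward-facing units and confirming that the zero-slope case $w_{1j} = 0$ is negligible.
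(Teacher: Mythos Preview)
Your argument is correct: the case split on the sign of $w_{1j}$, the use of positive homogeneity $\sigma(a z)=a\,\sigma(z)$ for $a>0$, and the reflection identity $\sigma(z)=z+\sigma(-z)$ together yield exactly the claimed decomposition, and your bookkeeping of which indices contribute to $c_0$ and $c_1$ is right. The paper itself does not prove the lemma here but defers to the companion paper~\cite{ccr86994}, so there is no in-paper argument to compare against; your direct algebraic verification is the natural proof and would be what one expects that reference to contain.
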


\begin{proof}
    The proof is given in \citet{ccr86994}.
\end{proof}

Besides making all rectified linear units in this transformation face forward, the utility of~\eqref{eq:forward-relu} is that the neural network is conveniently decomposed into easily understandable parts.
The coefficient $s_j$ describes the activated slope of the $j^{\rm th}$ rectified linear unit, for which the knot is located at $x = x_j$.
Furthermore, the coefficients $c_1$ and $c_0$ describe the line that must be added to the sum of rectified linear units to produce an equivalence with the original model~\eqref{eq:scalar-shallow}.
For the remainder of the paper, we will assume that $w_{1j}$, $b_{1j}$, and $w_{2j}$ are sorted so that $x_1 \le \dots \le x_n$.

Given~\eqref{eq:forward-relu}, it is natural to ask how the parameters~\eqref{eq:forward-relu-parameters} are correlated with each other, supposing that $w_{1j}$, $b_{ij}$, and $w_{2j}$ are independent for all $j = 1, \dots, n$, and the distributions are symmetric about zero.
Some pairs of parameters are uncorrelated by inspection.
For instance, consider $c_0$ and $c_1$.
These two parameters must have distributions that are also symmetric about zero.
Although they share $w_{2j}$ in common, their signs are independent because of the distinction between $w_{1j}$ in $c_1$ and $b_{1j}$ and $b_2$ in $w_{0j}$.
Therefore, with $E(\cdot)$ indicating the expected value, $E(c_0 c_1) = 0$, and $c_0$ and $c_1$ must be uncorrelated.
Likewise, each $s_j$ on its own is also uncorrelated from $c_0$, by the same argument of distributions being symmetric about zero.

For other pairs whose relations are less obvious, the correlation is most readily computed using the correlation coefficient.
We will not tabulate the correlation between each pair of parameters, but let us consider two examples.
First, the correlation between $s_j$ and $x_j$, by definition, is
\begin{subequations}
    \begin{align}
        \corr(s_j, x_j)
        &= \frac{E((s_j - E(s_j)) (x_j - E(x_j)))}{\sqrt{\var(s_j) \var(x_j)}} \\
        &= \frac{E(s_j x_j)}{\sqrt{\var(s_j) \var(x_j)}} \\
        &= \frac{E(-w_{2j} b_{1j} \sgn(w_{1j}))}{\sqrt{\var(s_j) \var(x_j)}} \\
        &= 0.
    \end{align}
\end{subequations}
Second, the correlation between each $s_j$ and $c_1$ is
\begin{subequations}
    \begin{align}
        \corr(s_j, c_1) &= \frac{E((s_j - E(s_j)) (c_1 - E(c_1)))}{\sqrt{\var(s_j) \var(c_1)}} \\
        &= \frac{E(s_j c_1)}{\sqrt{E(s_j^2) \var(c_1)}} \\
        \label{eq:corr-1}
        &= \frac{%
            E\left(w_{2j} |w_{1j}| \sum_{\stack{1 \le k \le n}{w_{1k} < 0}} w_{2k} w_{1k}\right)
        }{%
            \sqrt{E((w_{2j} w_{1j})^2) \var(c_1)}
        }.
    \end{align}
\end{subequations}
Since the weights are independent and their distributions are symmetric about zero, there is a $1 / 2$ chance that $w_{1j} < 0$ and hence the index $k = j$ appears in the sum.
The other half of the time, $k \ne j$ for all summands, and $w_{2j} |w_{1j}|$ is independent from the sum.
Thus,
\begin{subequations}
    \begin{align}
        \corr(s_j, c_1) &= \frac{E(-(w_{2j} w_{1j})^2)}{2 \sqrt{E((w_{2j} w_{1j})^2) \var(c_1)}} \\
        \label{eq:corr-2}
        &= -\frac{1}{2} \sqrt{\frac{E((w_{2j} w_{1j})^2)}{\var(c_1)}}.
    \end{align}
\end{subequations}
By the central limit theorem, $\var(c_1) \to \infty$ as $n \to \infty$, while $E((w_{2j} w_{1j})^2)$ remains fixed.
Thus, $\lim_{n \to \infty} \corr(s_j, c_1) = 0$.

\subsection{Probability distributions}
\label{sec:distributions}

In \autoref{sec:empirical}, numerical experiments showed that for $w_{1j}, b_{1j}, w_{2j}, b_2 \sim U(-1, 1)$ independently, the resulting neural network has a number of knots approximately equal to the number of neurons.
Conjecture~\ref{conj:one-root} then interpreted this to posit that the affine transformations in neurons are expected to have one root.
Here, we specifically examine the affine transformation~\eqref{eq:scalar-shallow} in the second-layer neurons for different probability distributions.

\subsubsection{Empirical root counts}
\label{sec:roots-distributions}

In \autoref{tab:distributions},
\begin{table}[!t]
    \centering
    \caption{%
        The number of roots of~\eqref{eq:scalar-shallow} for different weight and bias distributions, and in different domains, with $n = 10^4$.
        Each reported value is the average over $2\e{4}$ trials.
        A different set of trials is run for the two domains.%
    }
    \begin{tabular}{llll|ll}
        \multicolumn{4}{c|}{distributions} & \multicolumn{2}{c}{roots in} \\
        \hline
        $w_{1j}$ & $b_{1j}$ & $w_{2j}$ & $b_2$ & $\Reals$ & $(x_1, x_n)$ \\
        \hline\hline
        $N(0, 1)$ & $N(0, 1)$ & $N(0, 1)$ & $N(0, 1)$ & 0.9967 & 0.9918 \\
        $N(0, 1)$ & $N(0, 1)$ & $N(0, 1)$ & 0 & 0.9954 & 1.0073 \\
        $U(-1, 1)$ & $U(-1, 1)$ & $U(-1, 1)$ & $U(-1, 1)$ & 1.0064 & 0.9886 \\
        $U(-1, 1)$ & $U(-1, 1)$ & $U(-1, 1)$ & 0 & 0.9940 & 0.9967 \\
        $N(0, 1)$ & $N(0, 1)$ & $\{-1, 1\}$ & 0 & 0.9992 & 1.0015 \\
        $N(0, 1)$ & $N(0, 1)$ & $\{-1, 1\}$ & $N(0, 1)$ & 1.0064 & 1.0038 \\
        $U(-1, 1)$ & $U(-1, 1)$ & $\{-1, 1\}$ & 0 & 1.0081 & 1.0114 \\
        $U(-1, 1)$ & $U(-1, 1)$ & $\{-1, 1\}$ & $U(-1, 1)$ & 0.9904 & 0.9976 \\
        $N(0, 1)$ & $U(-1, 1)$ & $\{-1, 1\}$ & 0 & 1.0040 & 1.0133 \\
        $U(-1, 1)$ & $N(0, 1)$ & $\{-1, 1\}$ & 0 & 0.9975 & 1.0021 \\
        $\{-1, 1\}$ & $N(0, 1)$ & $N(0, 1)$ & 0 & 1.0018 & 0.8341 \\
        $\{-1, 1\}$ & $U(-1, 1)$ & $U(-1, 1)$ & 0 & 1.0022 & 0.6634 \\
        $N(0, 1)$ & $\{-1, 1\}$ & $N(0, 1)$ & 0 & 0.9975 & 1.0002 \\
        $U(-1, 1)$ & $\{-1, 1\}$ & $U(-1, 1)$ & 0 & 1.0086 & 1.0027 \\
        $N(0, 1)$ & $\{-1, 1\}$ & $\{-1, 1\}$ & 0 & 1.0044 & 0.9960 \\
        $U(-1, 1)$ & $\{-1, 1\}$ & $\{-1, 1\}$ & 0 & 1.0022 & 1.0070 \\
        $\{-1, 1\}$ & $N(0, 1)$ & $\{-1, 1\}$ & 0 & 1.0039 & 0.8464 \\
        $\{-1, 1\}$ & $U(-1, 1)$ & $\{-1, 1\}$ & 0 & 0.9929 & 0.6678
    \end{tabular}

    \label{tab:distributions}
\end{table}
we numerically compute the number of roots of~\eqref{eq:scalar-shallow}, for different probability distributions of the weights and biases.
In these experiments, we select the weights and biases independently, each from one of four common distributions: the standard normal $N(0, 1)$, the uniform distribution $U(-1, 1)$, Bernoulli trials $\{-1, 1\}$ with $1 / 2$ probability each, and the point mass at zero.
We then count the number of roots in $\Reals$ and in $(x_1, x_n)$ (i.e., between the most negative and the most positive knots only).
For each set of distributions, $2\e{4}$ trials are run on single-layer neural networks with $n = 10^4$ neurons, and the mean number of roots is reported.

The empirical number of roots is very simple: in all cases, the mean number of roots in all of $\Reals$ is roughly one, which is in excellent agreement with Conjecture~\ref{conj:one-root}.
This holds not only in very simple cases such as the last entry of \autoref{tab:distributions}---where knot locations and slopes are simply $x_j \sim U(-1, 1)$ and $s_j \sim \{-1, 1\}$, and $x_j$ and $s_j$ are independent---but also in more complicated cases where all weights and biases are normal or uniform.

It is also apparent from \autoref{tab:distributions} that if one were to search for roots only in $(x_1, x_n)$, then some of the simpler distributions produce a smaller number of roots in the affine transformation.
This scenario is less relevant to the study of random neural networks, since neurons can produce new knots for any $x \in \Reals$.
We analyze the presence of roots in $(-\infty, x_1)$ or $(x_n, \infty)$ in \autoref{sec:ends}.

\subsubsection{Scaling of distributions}
\label{sec:scaling}

Another notable property is that if the random distributions are symmetric about zero and $b_2 = 0$, then the scalings of the distributions have no effect on the expected number of roots.
This can be seen by examining the sets of weights and biases individually.
Most simply, if $w_{2j}$ were replaced by $a w_{2j}$ for some $a > 0$, then $y$ in~\eqref{eq:scalar-shallow} would be replaced by $a y$, and the number of roots would not change.
On the other hand, if the input biases $b_{1j}$ were replaced by $a b_{1j}$, then the model $y = \sum_j w_{2j} \sigma(w_{1j} x + a b_{1j})$ would equivalently be $y / a = \sum_j w_{2j} \sigma(w_{1j} x / a + b_{1j})$; that is, the entire model would be scaled both horizontally and vertically by $a$, again leaving the number of roots unchanged.
Finally, if the input weights $w_{1j}$ were replaced by $a w_{1j}$, then equivalently, $x$ in~\eqref{eq:scalar-shallow} would be replaced by $x / a$.
That is, the model would be horizontally inversely scaled, but the vertical scale and the number of roots would remain invariant.

At this point, it remains to describe how the distribution of the output bias $b_2$ affects the number of roots.
In \autoref{sec:linear}, we investigate the effect of scaling $c_0$.
By~\eqref{eq:ci}, such a scaling can be interpreted as a modification of only $b_2$, while all other parameters remain fixed.

\subsubsection{Example set of distributions}
\label{sec:example-distributions}

Next, we discuss a particular set of distributions that we will use for further empirical tests in \autoref{sec:integrated-random-walk}.
For the case of $w_{1j}, b_{1j} \sim N(0, 1)$, $w_{2j} \sim \{-1, 1\}$, and $b_2 = 0$, the distributions of the equivalent parameters in~\eqref{eq:forward-relu-parameters} can be derived.
The slopes $s_j = w_{2j} |w_{1j}|$ are given by $N(0, 1)$.
The factor of $|w_{1j}|$ is constrained to be positive, with zero probability of being zero.
On the other hand, $w_{2j}$ is chosen independently, so the sign of $s_j$ is positive or negative with one-half probability each.

Next, the knot locations $x_j = -b_{1j} / w_{1j}$ are given by the standard Cauchy distribution, which we denote $\cauchy(0, 1)$.
This distribution has a probability distribution function (PDF) $1 / (\pi (x^2 + 1))$ and cumulative distribution function (CDF) $(\tan^{-1} x) / \pi + 1 / 2$.
The Cauchy distribution is a bell curve, but its tails decay much more slowly than in normal distributions.
In fact, the slow decay causes the distribution to have undefined moments, including mean and variance.

Moving forward, we investigate the distributions of $c_0$ and $c_1$ These parameters are more challenging because the number of summands is not fixed, but is rather chosen from a binomial distribution.
Specifically, let $r$ be the number of terms in the set $\{w_{1j} | w_{1j} < 0\}$.
Since $w_{1j}$ is chosen symmetrically about zero, we let $B$ denote the binomial distribution and simply conclude that $r \sim B(n, 1 / 2)$.
Thus, in the limit of $n \to \infty$, $r \sim N(n / 2, n / 4)$.
For both $c_0$ and $c_1$, each of the $r$ summands is chosen from $N(0, 1)$: in the former case, the sign provided by $w_{2j}$ does not affect the distribution of $b_{1j}$; in the latter case, $w_{1j} < 0$, but $w_{2j}$ allows each summand to take either sign.

If $r$ is explicitly known, then $c_0, c_1 \sim N(0, r)$.
Therefore, the distributions of $c_0$ and $c_1$ for $n \to \infty$ can be computed by considering the distribution of the number of summands $r$.
Let $\phi$ and $\Phi$ respectively denote the PDF and CDF of the standard normal distribution.
Also, let $F_i$ denote the CDF of $c_i$ for $i = 0, 1$, and let $F_r$ denote the CDF of $r$.
We then find that for $n \to \infty$,
\begin{subequations}
    \label{eq:dist-0}
    \begin{align}
        dF_i(c_i)
        &= \int_{r=0}^\infty dF_i(c_i | r) dF_r(r) \\
        &= \int_{r=0}^\infty \left(\frac{1}{\sqrt{r}} \phi\left(\frac{c_i}{\sqrt{r}}\right) \, dc_i\right)
        \left(\frac{2}{\sqrt{n}} \phi\left(\frac{2 r - n}{\sqrt{n}}\right)\right) \, dr.
    \end{align}
\end{subequations}
Since the mean of $F_r$ scales with $n$, let us introduce a change of variables by defining $\hat{r} := r / n$, so as to remove this dependence.
Then,
\begin{equation}
    dF_i(c_i) = \int_{\hat{r}=0}^\infty \left(\frac{1}{\sqrt{n \hat{r}}} \phi\left(\frac{c_i}{\sqrt{n \hat{r}}}\right) \, dc_i\right)
    (2 \sqrt{n} \phi(\sqrt{n} (2\hat{r} - 1))) \, d\hat{r}.
\end{equation}
Note, however, that if $\delta$ denotes the Dirac delta function, then
\begin{equation}
    \lim_{n \to \infty} 2 \sqrt{n} \phi(\sqrt{n} (2\hat{r} - 1)) = \delta(\hat{r} - 1 / 2).
\end{equation}
Thus,
\begin{equation}
    \label{eq:dist-1}
    dF_i(c_i) = \sqrt{\frac{2}{n}} \phi\left(\sqrt{\frac{2}{n}} c_i\right) \, dc_i;
\end{equation}
that is, $c_0, c_1 \sim N(0, n / 2)$.
Further statistics may be derived; for example, $\corr(s_j, c_1) = - 1 / \sqrt{2 n}$, as per~\eqref{eq:corr-2}.

\section{Connection with integrated random walks}
\label{sec:integrated-random-walk}

In this section, we will draw and analyze a connection between the single-layer neural network architecture in~(\ref{eq:scalar-shallow},~\ref{eq:forward-relu}) and integrated random walks of variable step size.
The purpose of this connection is ultimately to provide a different perspective for understanding Conjecture~\ref{conj:one-root}.

First, the integrated random walk is defined and related to single-layer neural networks in \autoref{sec:equivalence}.
Next, we discuss the number of zero crossings of integrated random walks and some related statistics in \autoref{sec:zero-crossings}, and we comment on fixed versus variable step sizes.
In \autoref{sec:linear}, we empirically analyze how $c_0$ and $c_1$ in~\eqref{eq:forward-relu}---which are essentially the initial conditions of the associated integrated random walk---affect the number of zero crossings of the neural network.
Finally, we empirically study the variances and covariances of the integrated random walk in \autoref{sec:variance}.

\subsection{Overview and equivalence with random neural networks}
\label{sec:equivalence}

We will begin with basic definitions.
Note that because our goal is ultimately to relate integrated random walks with neural networks, we will use non-standard definitions and nomenclature.

\begin{definition}
    Let $s_j$ be i.i.d.\ random variables for $j = 1, \dots$.
    For some initial condition $y_0'$, a \emph{random walk} in one dimension is the sequence
    \begin{equation}
        \label{eq:random-walk}
        y_k' = y_0' + \sum_{j=1}^k s_j
    \end{equation}
    for $k = 0, 1, \dots$.
\end{definition}

\begin{remark}
    The random walk can be written in iterative form as
    \begin{equation}
        \label{eq:rw-iterative}
        y_{k+1}' = y_k' + s_{k+1}, \quad
        k = 0, 1, \dots.
    \end{equation}
    An example of a random walk is shown in \autoref{fig:random_walk}(a).
\end{remark}

\begin{figure}[!t]
    \centering
    \includegraphics{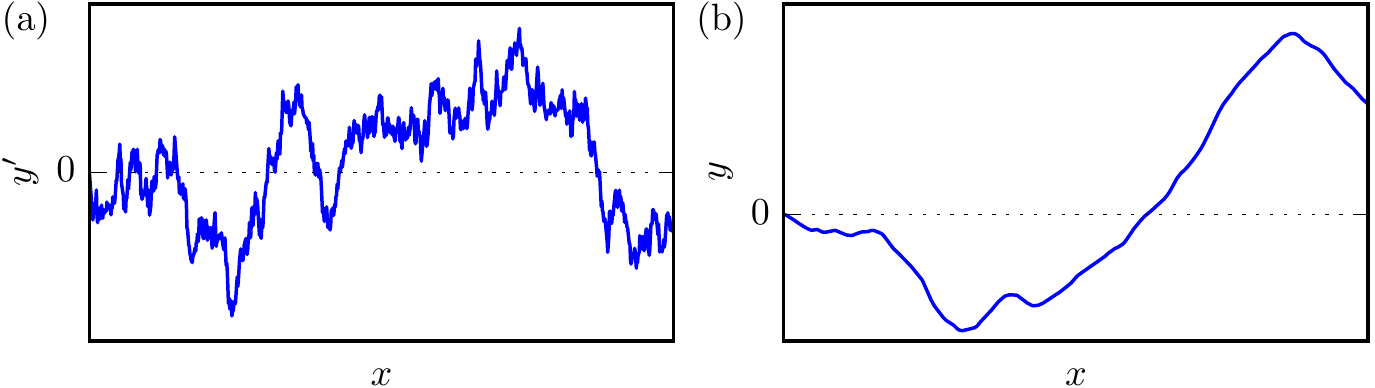}

    \caption{%
        (a)~A random walk~\eqref{eq:random-walk} with $10^3$ steps, where $s_j \sim N(0, 1)$ for all $j$.
        (b)~The integrated random walk~\eqref{eq:irw-fixed-dx} for (a).%
    }
    \label{fig:random_walk}
\end{figure}

\begin{definition}
    Given~\eqref{eq:random-walk} and some initial condition $y_1$, an \emph{integrated random walk of fixed step size} $\dx$ in one dimension is the sequence
    \begin{equation}
        \label{eq:irw-fixed-dx}
        y_k = y_1 + \dx \sum_{j=1}^{k-1} y_j'
    \end{equation}
    for $k = 1, \dots$.
\end{definition}

An example of an integrated random walk is shown in \autoref{fig:random_walk}(b).

\begin{definition}
    For $j = 1, \dots$, consider a sequence of points $x_j$ with $x_j < x_{j+1}$ for all $j$, and define
    \begin{equation}
        \label{eq:dx}
        \dx_j = x_{j+1} - x_j.
    \end{equation}
    Given~\eqref{eq:random-walk} and some initial condition $y_1$, an \emph{integrated random walk of variable step size} in one dimension is the sequence
    \begin{equation}
        \label{eq:integrated-random-walk}
        y_k = y_1 + \sum_{j=1}^{k-1} \dx_j \, y_j'
    \end{equation}
    for $k = 1, \dots$.
\end{definition}

\begin{remark}
    The integrated random walk of variable step size can be written in iterative form as
    \begin{subequations}
        \label{eq:irw-iterative}
        \begin{align}
            x_{k+1} &= x_k + \dx_k \\
            \begin{bmatrix}
                y_{k+1} \\ y_{k+1}'
            \end{bmatrix}
            &=
            \begin{bmatrix}
                1 & \dx_k \\ 0 & 1
            \end{bmatrix}
            \begin{bmatrix}
                y_k \\ y_k'
            \end{bmatrix}
            +
            \begin{bmatrix}
                0 \\ s_{k+1}
            \end{bmatrix}
        \end{align}
    \end{subequations}
    for $k = 1, \dots$, with initial conditions $y_1$ and $y_1' = y_0' + s_1$.
    With a fixed step size, the iteration is the same, except with $\dx$ in place of $\dx_k$.
    The notation is shown visually in \autoref{fig:irw-labeled}.
\end{remark}

\begin{figure}[!t]

\centering

\begin{tikzpicture}[x=7.5mm, y=7.5mm, node distance=1mm]
    \def\y{-0.15}

    \node (ic) at (0, 3) {};
    \node (1) at (1.5, 1.5) [knot] {};
    \node [above=of 1, blue] {$y_1$};
    \draw [gray] (1.5, \y) -- node [xtick] {$x_1$} (1);

    \node (2) at (3.45, 3.15) [knot] {};
    \node [above=of 2, blue, yshift=-1mm] {$y_2$};
    \draw [gray] (3.45, \y) -- node [xtick] {$x_2$} (2);

    \node (3) at (4.65, 1.8) [knot] {};
    \node [above=of 3, blue] {$y_3$};
    \draw [gray] (4.65, \y) -- node [xtick] {$x_3$} (3);

    \node (4) at (5.4, 2.55) [knot] {};
    \node [above=of 4, blue, yshift=-1mm] {$y_4$};
    \draw [gray] (5.4, \y) -- node [xtick] {$x_4$} (4);

    \node (5) at (7.5, 1.65) [knot] {};
    \node (f) at (8.5, 3.3) {};
    \node [above=of 5, blue, xshift=-1mm] {$y_5$};
    \draw [gray] (7.5, \y) -- node [xtick] {$x_5$} (5);

    \draw [thick] (0, 0) -- node [pos=1, xshift=-1mm, below] {$x$} (9, 0);
    \draw [thick] (0, 0) -- node [pos=1, yshift=-1mm, left] {$y$} (0, 3.75);

    \draw [thick, Green4] (ic) -- node [below, Green4, yshift=-1mm, pos=0.4] {$y_0'$} (1);
    \draw [thick, Green4] (1) -- node [below, Green4, pos=0.6] {$y_1'$} (2);
    \draw [thick, Green4] (2) -- node [below, Green4, pos=0.4, yshift=-1mm] {$y_2'$} (3);
    \draw [thick, Green4] (3) -- node [below, Green4, pos=0.6] {$y_3'$} (4);
    \draw [thick, Green4] (4) -- node [below, Green4] {$y_4'$} (5);
    \draw [thick, Green4] (5) -- node [right, Green4] {$y_5'$} (f);
\end{tikzpicture}



    \caption{The integrated random walk notation used in this section.}
    \label{fig:irw-labeled}
\end{figure}
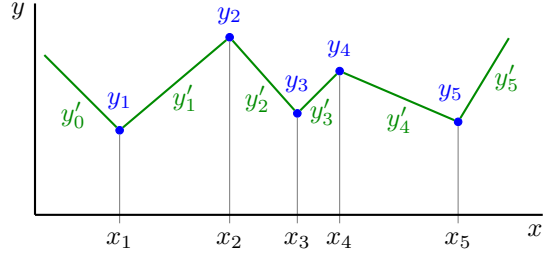

Having defined the integrated random walk, the equivalence with the single-layer neural network~\eqref{eq:forward-relu} can be shown.
For $k = 1, \dots, n$, (\ref{eq:random-walk}, \ref{eq:integrated-random-walk}, \ref{eq:irw-iterative}) is defined for discrete $x$ and $y$ and is merely a discrete sampling of~\eqref{eq:forward-relu}, which is defined for all $x \in \Reals$ and for continuous $y$.
Conceptually, both the forward-facing rectified linear unit form of the neural network~\eqref{eq:forward-relu} and the integrated random walk~\eqref{eq:integrated-random-walk} perform the same action.
That is, each time $x$ is advanced from one knot to the next, a new slope is added to the current slope.
This is achieved in the neural network by turning on another rectified linear unit, and in the integrated random walk by adding a new element of the random walk.
The equivalence is stated formally as follows.

\begin{theorem}
    \label{thm:equivalence}
    Let $x_k$ be given by~\eqref{eq:xj}, and let $y_k = y(x_k)$.
    The random single-layer neural network model~\emph{(\ref{eq:scalar-shallow},~\ref{eq:forward-relu})} and the integrated random walk of variable step size~\emph{(\ref{eq:random-walk},~\ref{eq:integrated-random-walk},~\ref{eq:irw-iterative})} produce the same set of points $(x_1, y_1)$, \ldots, $(x_n, y_n)$ if the initial conditions of the integrated random walk are
    \begin{subequations}
        \label{eq:ic}
        \begin{align}
            y_0' &= c_1 \\
            y_1 &= c_1 x_1 + c_0.
        \end{align}
    \end{subequations}
\end{theorem}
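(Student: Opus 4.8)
The plan is to verify the claimed equality $y(x_k) = y_k$ for every $k = 1, \dots, n$ by a direct induction on $k$, exploiting the piecewise-linear structure of~\eqref{eq:forward-relu} together with the iterative form~\eqref{eq:irw-iterative} of the integrated random walk. Since the theorem asks only for agreement of the $n$ sampled points $(x_k, y_k)$, it suffices to match these values; I would do so while simultaneously tracking the slope of~\eqref{eq:forward-relu} on the interval between consecutive knots, since that slope is precisely the quantity that the random-walk value $y_k'$ is meant to encode.

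For the base case $k = 1$, I would evaluate~\eqref{eq:forward-relu} at $x_1$. Because the knots are sorted, $x_1 - x_j \le 0$ for every $j$, so every rectified linear unit is inactive and $\sigma(x_1 - x_j) = 0$; only the line $c_1 x + c_0$ survives, giving $y(x_1) = c_1 x_1 + c_0$, which is exactly the prescribed initial condition $y_1$ in~\eqref{eq:ic}.

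The inductive step is where the slope bookkeeping does the work. On the open interval $(x_k, x_{k+1})$ the function~\eqref{eq:forward-relu} is affine, and its slope is the sum of the drift $c_1$ and the activated slopes $s_1, \dots, s_k$ of exactly those rectified linear units whose knots lie at or below $x_k$ (a consequence of the sorting and the forward-facing property from Lemma~\ref{lem:forward-relu}). Comparing this with the random-walk value $y_k' = y_0' + \sum_{j=1}^k s_j$ and substituting $y_0' = c_1$ from~\eqref{eq:ic}, I would observe that the neural-network slope on $(x_k, x_{k+1})$ equals $y_k'$ exactly. Advancing from $x_k$ to $x_{k+1}$ along this single affine piece then gives $y(x_{k+1}) = y(x_k) + y_k'\,(x_{k+1} - x_k) = y_k + y_k'\,\dx_k$, which is precisely the update~\eqref{eq:irw-iterative}; combined with the inductive hypothesis $y(x_k) = y_k$, this yields $y(x_{k+1}) = y_{k+1}$ and closes the induction.

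The argument is essentially bookkeeping, so I do not anticipate a serious obstacle; the only points requiring care are the correct identification of which rectified linear units are active on each interval and the consistent treatment of the boundary term $j = k$, where $\sigma(x_k - x_k) = 0$ contributes nothing to the value but signals that $s_k$ joins the running slope exactly as $x$ crosses $x_k$. As a closed-form alternative that avoids induction, one could instead expand $y_k = y_1 + \sum_{i=1}^{k-1}\dx_i\,y_i'$ directly, interchange the order of the resulting double summation, and telescope $\sum_{i=j}^{k-1}\dx_i = x_k - x_j$ to recover $c_1 x_k + c_0 + \sum_{j=1}^{k-1} s_j(x_k - x_j)$, matching $y(x_k)$ term for term; this route requires the same vigilance with the summation limits but no separate base case.
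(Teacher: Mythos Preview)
Your proposal is correct. Your primary route---induction on $k$ via the iterative form~\eqref{eq:irw-iterative}, tracking the piecewise slope of~\eqref{eq:forward-relu} and identifying it with $y_k'$---is a genuinely different organization from the paper's proof, which instead starts from the closed-form~\eqref{eq:integrated-random-walk}, expands the double sum $\sum_{j=1}^{k-1}\dx_j\bigl(c_1+\sum_{a=1}^j s_a\bigr)$, regroups to obtain $c_1 x_k + c_0 + \sum_{j=1}^{k-1} s_j(x_k - x_j)$, and then inserts the rectified linear units after the fact by observing that $x_k - x_j > 0$ for $j<k$ and $x_k - x_j \le 0$ for $j\ge k$. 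Your inductive argument makes the ``each knot turns on one more slope'' picture explicit and sidesteps the summation interchange, at the cost of needing a separate base case; the paper's version is a one-shot algebraic identity but hides the geometric content. The closed-form alternative you sketch at the end is essentially the paper's computation verbatim.
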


\begin{proof}
    First, note that under the assumptions of the random neural network model, the slopes $s_j$~\eqref{eq:xj} are i.i.d., as in the random walk.
    Now, substituting~(\ref{eq:random-walk},~\ref{eq:dx},~\ref{eq:ic}) into~\eqref{eq:integrated-random-walk} for $k = 1, \dots, n$ yields
    \begin{subequations}
        \begin{align}
            y_k &= c_1 x_1 + c_0 + \sum_{j=1}^{k-1} (x_{j+1} - x_j) \left(c_1 + \sum_{a=1}^j s_a\right) \\
            \begin{split}
                &= c_1 x_1 + c_0 + (x_k - x_1) c_1 + (x_2 - x_1) s_1 + (x_3 - x_2) (s_1 + s_2) + \dots \\
                &\quad + (x_k - x_{k-1})(s_1 + \dots + s_{k-1})
            \end{split} \\
            &= c_1 x_k + c_0 + (x_k - x_1) s_1 + (x_k - x_2) s_2 + \dots + (x_k - x_{k-1}) s_{k-1} \\
            \label{eq:irw-sum}
            &= c_1 x_k + c_0 + \sum_{j=1}^{k-1} s_j (x_k - x_j).
        \end{align}
    \end{subequations}
    Since $k > j$ in the summands, $x_k - x_j > 0$; thus, rectified linear units can be added with no effect, as
    \begin{equation}
        y_k = c_1 x_k + c_0 + \sum_{j=1}^{k-1} s_j \sigma(x_k - x_j).
    \end{equation}
    Furthermore, for $j = k, \dots, n$, $x_k - x_j \le 0$, so extra unactivated rectified linear units can be added, as
    \begin{equation}
        y_k = c_1 x_k + c_0 + \sum_{j=1}^n s_j \sigma(x_k - x_j).
    \end{equation}
    Thus, we have recovered~\eqref{eq:forward-relu} for $x = x_k$.
\end{proof}

The fact that random single-layer neural networks are equivalent to integrated random walks has an important impact for large $n$.
Most notably, in the limit that $n \to \infty$, most common choices of weight and bias distributions would lead to an increasingly dense set of knot points $x_j$.
If the knot points $x_j$ were to make up a continuum in some subset of $\Reals$, then the random walk~\eqref{eq:random-walk} would be a Wiener process---i.e., a Brownian motion---in $x$.
Therefore, the integrated random walk would be an integrated Wiener process.
More specifically, it would be a time-changed integrated Wiener process if the $x_j$ were not equally spaced.
Since the Wiener process is $C^0$ in $x$, the integrated Wiener process is $C^1$.
Thus, in the limit that $n \to \infty$, the random neural network must be reasonably smooth.
Even if $n$ were somewhat small, the random neural network would still be somewhat smooth in the sense of total variation.

The smoothness of random neural networks is evident
in \autoref{fig:example}, particularly in the right column, as well as
in \autoref{fig:random_walk}(b).
One of the most important
implications of the smoothness is that it prevents optimizers from
immediately overfitting neural networks to training data, particularly
when the data contain high-wavenumber noise.
We explore the nature of
early training in \autoref{sec:early-training}, where we examine
weight and bias distributions produced by optimizers.

\subsection{Zero crossings and initial conditions}
\label{sec:zero-crossings}

As previously stated, the objective of the integrated random walk analysis is to prove that our particular form of the integrated random walk has one root (Conjecture~\ref{conj:one-root}), so as ultimately to prove that the number of knots in random neural networks equals the number of neurons (Conjecture~\ref{conj:num-knots}).
Unfortunately, the number of roots in integrated random walks of fixed step size is still not very well understood \citep{DenisovAIHPPS15,GroeneboomAP99,KratzPS06,SinaiTMP92}, let alone the number of roots with variable step size.
Here, we review some of the previous literature with fixed step size and report some empirical findings, before turning to roots of integrated random walks with variable step size.

A number of studies have considered the semi-infinite integrated random walk of fixed step size with homogeneous initial conditions $y_0' = y_1 = 0$.
It has been shown that the probability that the first zero crossing---often called the exit time---occurs at or above some index $k$ scales asymptotically by $k^{-1/4}$.
This result was shown for the random variables $s_j \sim \{-1, 1\}$ by Sinai \citep{SinaiTMP92}, which proved the result both for a finite step size and for the integrated Wiener process, as the continuum limit of the integrated random walk.
This topic was further studied by a number of additional authors \citep[e.g.,][]{DenisovAIHPPS15,GroeneboomAP99}, who have refined the asymptotic limits on the exit time, computed the multiplicative coefficient on $k^{-1/4}$, and provided additional analytical insight.

At this point, however, there do not appear to exist exact (as opposed to asymptotic) analytical results on the zero crossings of integrated random walks.
We briefly report on some empirical findings based on $10^6$ trials of integrated random walks, with homogeneous initial conditions.
This model is equivalently
\begin{equation}
    \label{eq:irw-homogeneous}
    f(x) := \sum_{j=1}^n s_j \sigma(x - x_j).
\end{equation}
In each trial, the random variables are $s_j \sim N(0, 1)$, each chosen independently, and $10^6$ steps of fixed size $\dx = 1$ are taken.
Then, the properties of the zero crossings are evaluated.
\autoref{fig:survival}(a)
\begin{figure}[!t]
    \centering
    \includegraphics{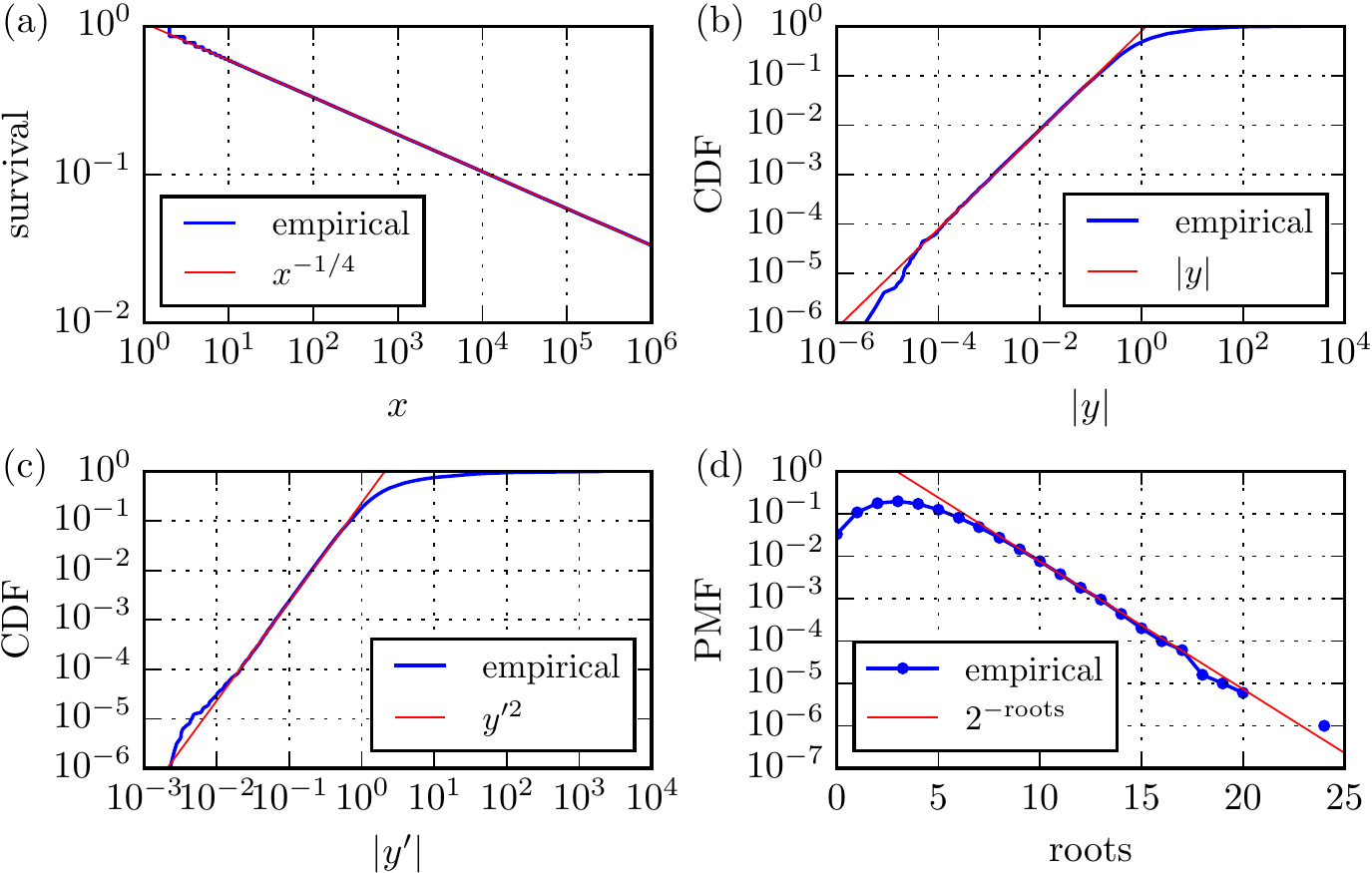}

    \caption{%
        Statistics of integrated random walks with $s_j \sim N(0, 1)$, with $10^6$ steps of fixed size $\dx = 1$.
        Empirical results (blue) and asymptotic limits (red) up to a multiplicative constant are shown.
        (a)~The survival function of the first zero crossing.
        (b)~The CDF of the integrated random walk value $|y|$ at the step before the first zero crossing.
        (c)~As (b), but with the random walk value $|y'|$.
        (d)~The probability mass function of the number of roots out of the $10^6$ steps.%
    }
    \label{fig:survival}
\end{figure}
shows the survival function (i.e., unity minus the CDF) of the first zero crossing's location.
The numerical experiments confirm the $x^{-1/4}$ asymptote \citep{DenisovAIHPPS15,GroeneboomAP99,SinaiTMP92}.

Next, \autoref{fig:survival}(b,~c) respectively shows the CDF of $|y|$ and $|y'|$ at the iteration just before the first zero crossing.
For $|y|, |y'| < 1$, the asymptotic scalings given by $|y|$ and $y'^2$ fit the data well.
Note that by comparison, for some $z \sim N(0, 1)$, the CDF of $|z|$ goes by $\sqrt{2 / \pi} |z| + \O(|z|^3)$ for $|z| \ll 1$.
The random walk and integrated random walk are of course more complicated, because the respective single and double integration of the random variable add a ``hysteresis'' effect.
For instance, if $y_k > 0$ and $y_k' \ll -1$, then a zero crossing is more likely to occur between indices $k$ and $k + 1$ than if $y_k' \approx -1$ or if $-1 \ll y_k' < 0$.
Thus, as reflected in \autoref{fig:survival}(c), large values of $|y'|$ before the first zero crossing are preferred over small values, as compared to the CDF $|z|$.

Finally, the probability mass function (PMF) of the number of roots, out of $10^6$ steps, is shown in \autoref{fig:survival}(d).
The median of the distribution is 3, and the empirical expected value is 3.71.
The most notable feature of the distribution is that the PMF is asymptotically related to the number of roots by $2^{-\mathrm{roots}}$.
Analytical results on the zero crossings of random walks are well-known.
For instance, for $s_k \sim \{-1, 1\}$, the probability that $y_k' = 0$ is asymptotically $1 / \sqrt{\pi k}$ for $k$ even and large, and the expected number of zeros out of $n$ steps is asymptotically $2 \sqrt{(n + 1) / \pi} - 1$ for $n$ even and large \citep{GrinsteadIP}.
In contrast, it is not presently clear for integrated random walks how the distribution of the number of zero crossings is analytically related to the number of steps.

In lieu of an analytical derivation, we empirically investigate the expected number of zero crossings as a function of the number of steps $n$.
In this numerical experiment, we simulate both integrated random walks with fixed $\dx$, as well as random neural networks~\eqref{eq:scalar-shallow} with $w_{1j}, b_{1j} \sim N(0, 1)$, $w_{2j} \sim \{-1, 1\}$, and $b_2 = 0$.
In the latter case, however, we then manually set $c_0 = c_1 = 0$ to be consistent with the homogeneous initial conditions in \autoref{fig:survival}.
For this particular case, as discussed in \autoref{sec:example-distributions}, the knot locations are given by $x_j \sim \cauchy(0, 1)$, and thus the step sizes $\dx_j$ are variable and random.
For both the integrated random walk and the neural network, the number of zero crossings is counted and averaged over 8,192 to $10^6$ trials, with $n = 1$ to $2^{20}$ steps.

\autoref{fig:crossings_ic}
\begin{figure}[!t]
    \centering
    \includegraphics{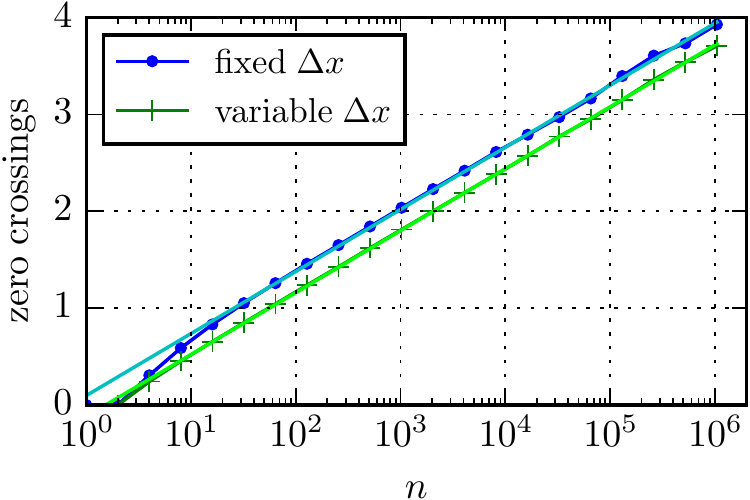}

    \caption{%
        Empirical number of zero crossings in $(x_1, x_n)$, for integrated random walks over $n$ steps with homogeneous initial conditions.
        Green: with fixed arbitrary step size $\dx$.
        Blue: as~\eqref{eq:scalar-shallow}, with $w_{1j}, b_{1j} \sim N(0, 1)$, $w_{2j} \sim \{-1, 1\}$, and $b_2 = 0$, but with $c_0$ and $c_1$ manually set to zero.
        For both, the best-fit lines for $n \gg 1$ are of the form $0.278 \ln n + \mathrm{const}$.%
    }
    \label{fig:crossings_ic}
\end{figure}
shows the dependence of the mean zero crossings on $n$.
A clear feature of the dependence is that for any $n$ that is reasonably large---about 16 for integrated random walks with fixed $\dx$, and 4 for the random neural networks with homogeneous initial conditions---the number of zero crossings is very close to $0.278 \ln n$ plus a constant.
The additive constants are different between the two cases.
Nevertheless, the fact that the two models have drastically different $x_j$ distributions, and yet have an essentially identical dependence on $n$, is unexpected.
We posit that the empirical $0.278 \ln n$ relationship may be supported by scaling arguments.

\subsection{Decomposition of random neural networks}
\label{sec:linear}

Another perspective on the roots of a random neural network can be gained by decomposing the neural network~\eqref{eq:forward-relu} into an integrated random walk with homogeneous initial conditions $y_0' = y_1 = 0$~\eqref{eq:irw-homogeneous} and the line $c_1 x + c_0$.
Quite simply, the random neural network $y(x)$ has a root wherever
\begin{equation}
    \label{eq:intersection}
    f(x) := \sum_{j=1}^n s_j \sigma(x - x_j) = -c_1 x - c_0.
\end{equation}
Whereas the integrated random walk form of the random neural network requires carefully chosen initial conditions~\eqref{eq:ic}, the utility of this decomposition is that it allows us to consider the simpler case of homogeneous initial conditions $y_0' = y_1 = 0$, as in \autoref{sec:zero-crossings}.
Instead of considering the roots of the integrated random walk, however, we now compare the integrated random walk to a line.

Three examples of the decomposition are shown in \autoref{fig:decomposition}.
\begin{figure}[!t]
    \centering
    \includegraphics{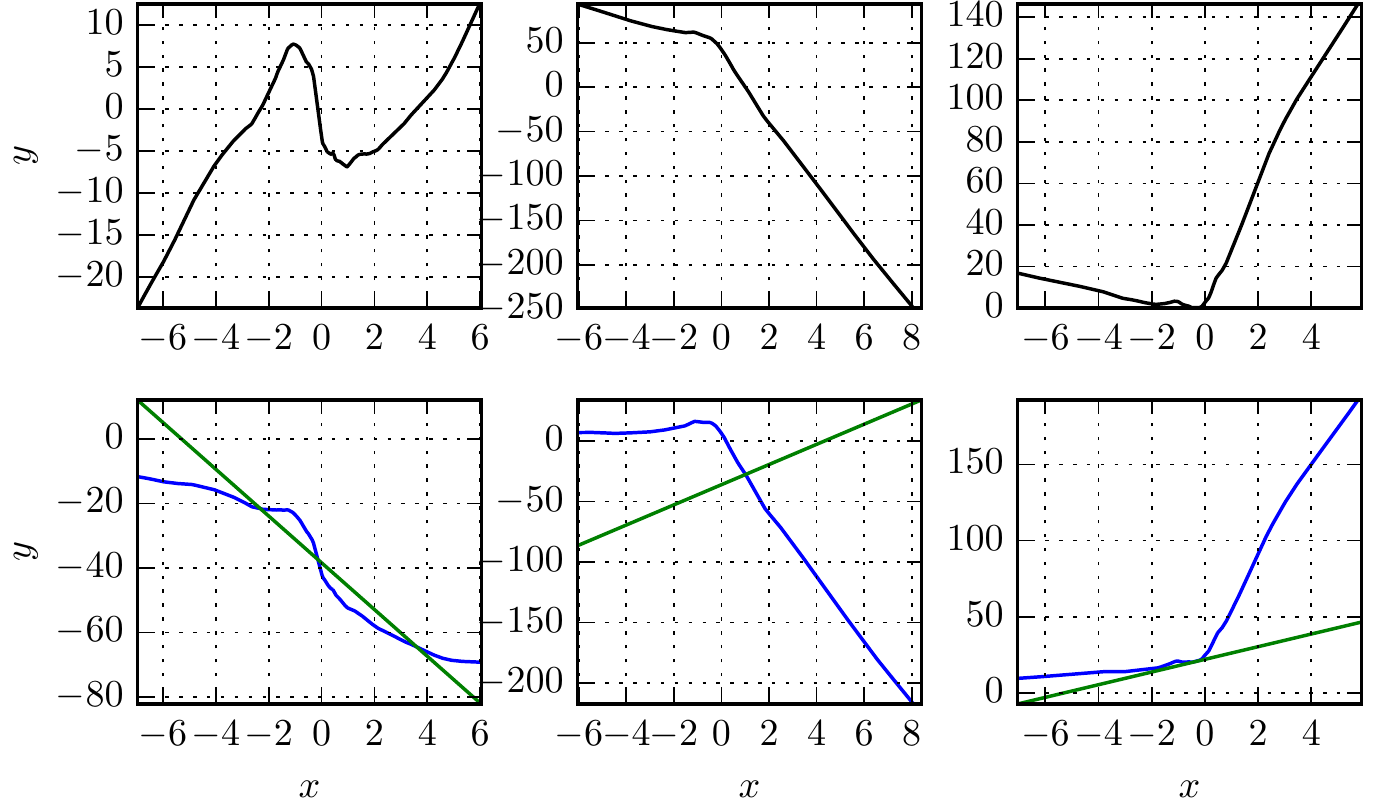}

    \caption{%
        Three examples (columns) of the decomposition of a random neural network with $n = 10^3$ neurons (top row) into an integrated random walk with homogeneous initial conditions (\eqref{eq:irw-homogeneous}; bottom row, blue) and the line $c_1 x + c_0$ (bottom row, green; shown as $-c_1 x - c_0$).
        Roots of the random neural network occur where the blue and green curves meet.%
    }
    \label{fig:decomposition}
\end{figure}
Again, Conjecture~\ref{conj:one-root} posited that random neural networks as described should have one root on average.
\autoref{fig:decomposition} shows examples with three, one, and zero roots.
By comparing the behavior of the integrated random walk and the line
over $x \in \Reals$, we may gain intuition on how many roots the
neural network should have.

The key experiment we conduct in this regard is to adjust the values of $c_0$ and $c_1$ manually after constructing random neural networks, and to observe how the expected number of roots is affected.
Here, we choose $w_{1j}, b_{1j} \sim N(0, 1)$, $w_{2j} \sim \{-1, 1\}$, and $b_2 = 0$ independently, as in \autoref{sec:example-distributions}; $n = 10^3$ neurons are used in each model.
Then, for each multiplicative factor in $2^{-20}, 2^{-19}, \dots, 2^{20}$, either $c_0$ or $c_1$ is multiplied by this factor while the other remains unchanged.
For each multiplicative factor, $10^5$ trials are run, and the average number of roots over the trials is computed.
The results of this experiment are shown in \autoref{fig:fudge}.
\begin{figure}[!t]
    \centering
    \includegraphics{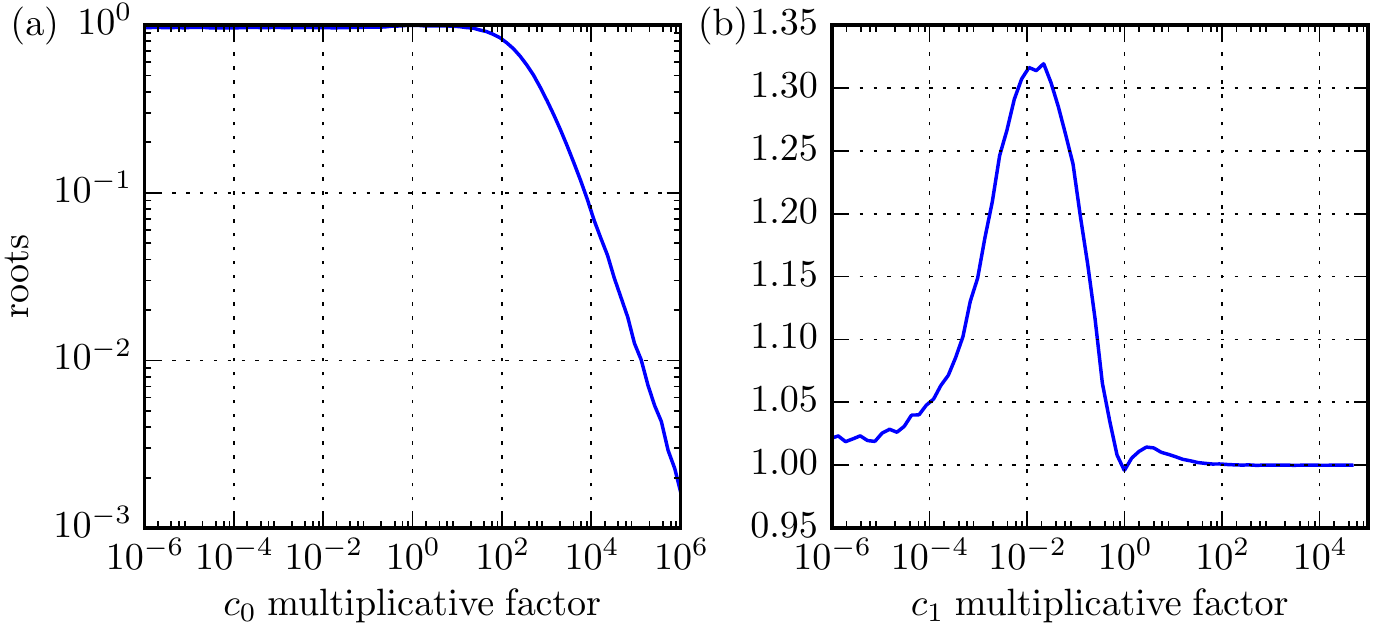}

    \caption{%
        The mean number of roots in the random neural network~\eqref{eq:forward-relu} with $n = 10^3$ as (a)~$c_0$ or (b)~$c_1$ is adjusted.%
    }
    \label{fig:fudge}
\end{figure}

Some of the features of \autoref{fig:fudge} are easily understandable, but others admittedly betray intuition.
For instance, if $|c_0|$ is increased greatly in \autoref{fig:fudge}(a), then the line $-c_1 x - c_0$ would be far removed from the integrated random walk (shown respectively in green and blue in \autoref{fig:decomposition}), thereby reducing the expected number of intersections.
A subtle phenomenon, however, is that if the multiplicative factor on $c_0$ is reduced from unity to zero, then the average number of roots actually decreases slightly---but statistically significantly---from 0.996 to 0.965.
It is not presently clear why the reduction of the bias would actually decrease the number of roots.

In \autoref{fig:fudge}(b), the expected number of roots approaches unity as the multiplicative factor on $c_1$ approaches infinity.
The reason is simply that in this limit, the right-hand side of~\eqref{eq:intersection} becomes a vertical line, which would have exactly one intersection with the integrated random walk.
Also, for $c_0 \ne 0$, a ``moderate'' multiplicative factor of around $10^{-2}$ on $c_1$ would produce a greater number of roots than a very small factor of less than $10^{-4}$.
With a very small factor, the line is flat and away from $y = 0$, and the integrated random walk would have reach $c_0$ to obtain an intersection.
The moderate factor also produces a greater number of roots than a very large factor.
With a moderate factor, the line stays near $y = 0$ for a large region of the domain, increasing the probability of intersections with the integrated random walk.
On the other hand, it is presently very unintuitive why the expected number of roots actually dips noticeably at a multiplicative factor of unity.
This suggests that the choice of $c_1$ given by~\eqref{eq:forward-relu-parameters} contains some special property that reduces the number of intersections between the integrated random walk and the line $-c_1 x - c_0$, as compared to both smaller and larger $c_1$.

A second set of experiments was conducted, where either $c_0$ or $c_1$
is multiplied by some factor, and the other coefficient is manually
set to zero.
In \autoref{fig:fudge_zeroed}(a),
\begin{figure}[!t]
    \centering
    \includegraphics{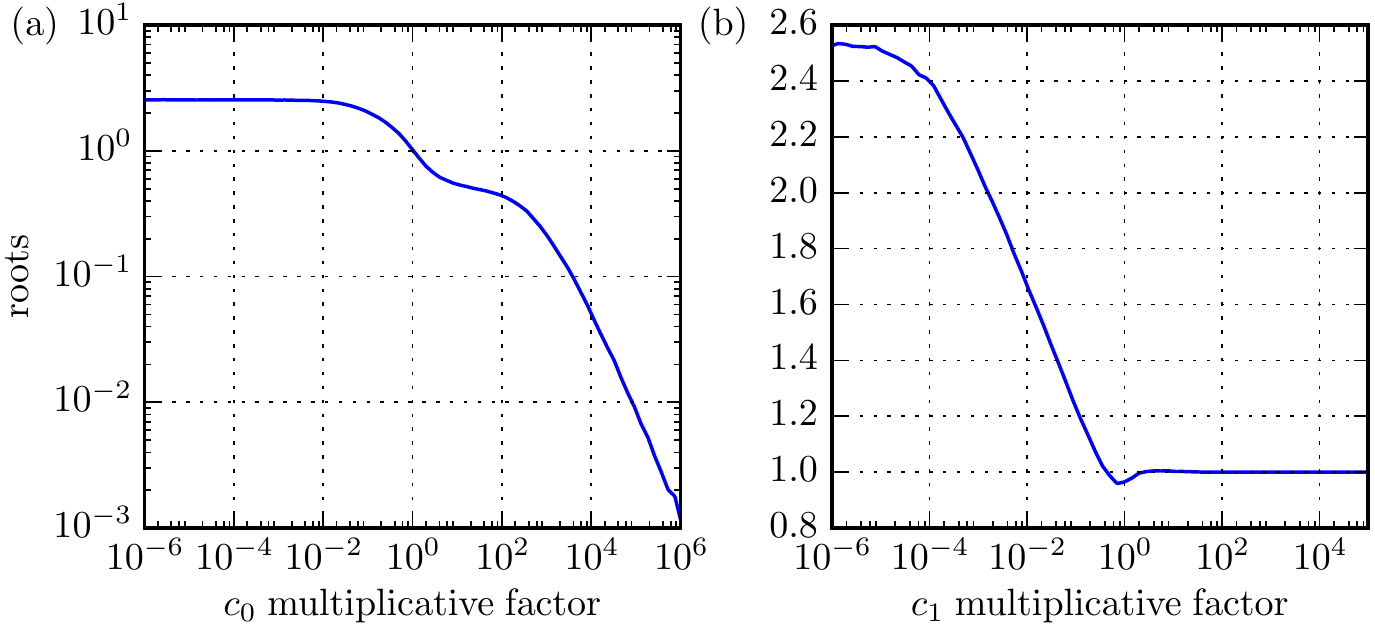}

    \caption{%
        The mean number of roots in the random neural network~\eqref{eq:forward-relu} with $n = 10^3$ as (a)~$c_0$ is adjusted and $c_1$ is set to zero, or (b)~$c_1$ is adjusted and $c_0$ is set to zero.%
    }
    \label{fig:fudge_zeroed}
\end{figure}
with $c_1 = 0$, small factors on $c_0$ increase the number of roots because a flat line near $y = 0$ is more likely to intersect with the integrated random walk than one further from $y = 0$.
In fact, the case with $c_0 = c_1 = 0$ corresponds exactly to root counting in the integrated random walk with variable step size and homogeneous initial conditions (\autoref{fig:crossings_ic}).
As a technicality, the case of $c_0 \to 0$ or $c_1 \to 0$ is different from $c_0 = 0$ or $c_1 = 0$, because the former does not allow the existence of a root between the first and second steps.

\autoref{fig:fudge_zeroed}(b) shows the number of roots with $c_0 = 0$ and $c_1$ adjusted by some multiplicative factor.
As before, a small $|c_1|$ allows the integrated random walk to intersect with the line more readily than a large $|c_1|$.
In the case of $|c_1| \gg 1$, the line is nearly vertical, and only one intersection is expected.
As with \autoref{fig:fudge}(b), however, it is unclear why there exists a small but statistically significant dip near a multiplicative factor of unity.
Again, there may exist some special property in the construction of $c_1$ that makes the number of roots increase of $c_1$ is altered in either direction.

\subsection{Variances and covariances}
\label{sec:variance}

Yet another angle from which Conjecture~\ref{conj:one-root} may be viewed is in terms of the variances and covariances of the random neural network, or of the decomposition described in \autoref{sec:linear}.
To begin, we turn again to the decomposition of the random neural network into an integrated random walk with homogeneous initial conditions and a line~\eqref{eq:intersection}.
Since the line is relatively easy to understand and the distribution of its coefficients can be derived, we focus on the variance of the integrated random walk with homogeneous initial conditions.
This is done with the understanding that the integrated random walk can later be compared back to the line to search for intersections of the two.

In the first experiment, $2\e{4}$ trials of random neural networks with $w_{1j}, b_{1j} \sim N(0, 1)$, $w_{2j} \sim \{-1, 1\}$, and $b_2 = 0$ (see \autoref{sec:example-distributions}) are run with $n = 10^3$ neurons.
In each trial, the line $c_0 x + c_1$ is discarded, and only the integrated random walk with homogeneous initial conditions is retained, as in~\eqref{eq:irw-homogeneous}.
Then, the points on the integrated random walk are separated into the $n$ Cauchy quantiles separated by the divisions at $x = \tan ((j / n - 1 / 2) \pi)$ for $j = 1, \dots, n - 1$.
For each quantile, the variance is computed and plotted in \autoref{fig:variance}.
\begin{figure}[!t]
    \centering
    \includegraphics{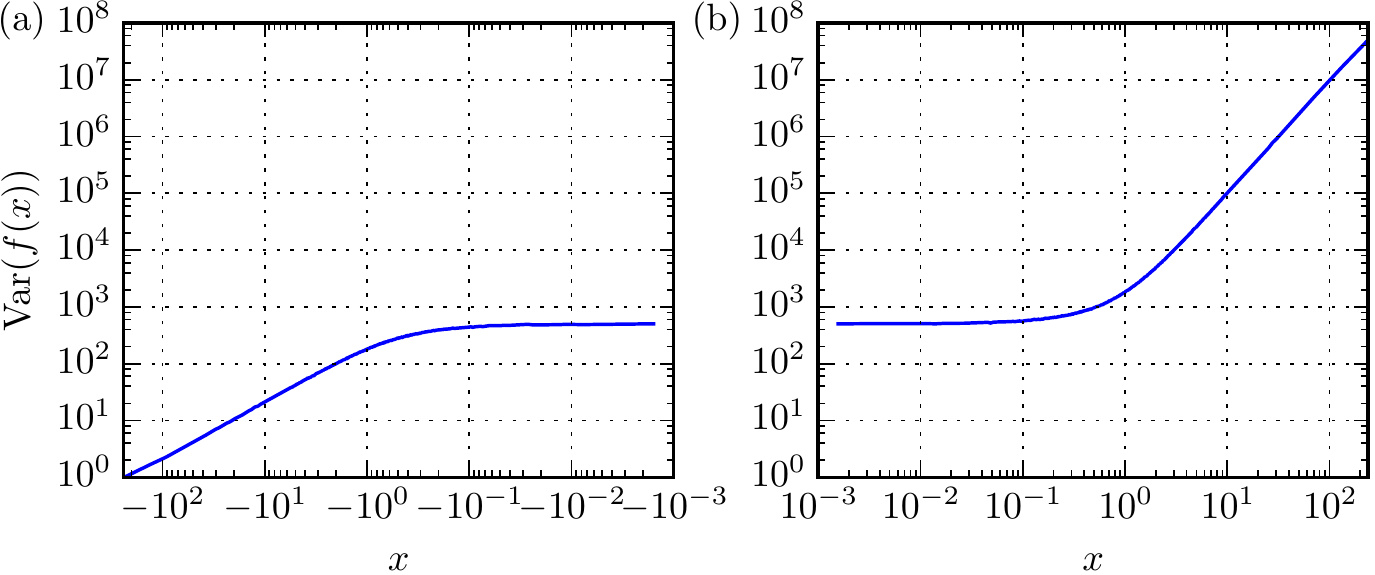}

    \caption{%
        Variance of the $10^3$-step integrated random walk with
        homogeneous initial conditions~\eqref{eq:irw-homogeneous}, for
        (a)~$x < 0$ and (b)~$x > 0$.%
    }
    \label{fig:variance}
\end{figure}

The data in \autoref{fig:variance} can be split roughly into three sections.
First, one quarter of the points $x_j \in \cauchy(0, 1)$ are in $(-\infty, -1)$, and the variance in this domain scales by $x^{-1}$.
Second, one half of the points $x_j$ are in $(-1, 1)$.
Although the variance looks flat in this region, the flatness is an artifact of the logarithmic scaling of the $x$-axis.
The empirical relation in this domain is roughly $\ln \var(f(x)) = 1.21 x + 6.26$, though higher-order terms are also present.
Finally, one quarter of the points $x_j$ are in $(1, \infty)$, and the variance scales by $x^2$.

The different scaling in $(-\infty, -1)$ and $(1, \infty)$ is very well-defined in the empirical data, and appears to a special feature of this integrated random walk.
By comparison, the variance of the integrated random walk of fixed step size~\eqref{eq:irw-fixed-dx} is known to scale by $\O(x^3)$: observing~\eqref{eq:irw-sum} with $c_0 = c_1 = 0$ and $x_k = (k - 1) \dx$ for some fixed $\dx$, we have that
\begin{equation}
    \var(y_k) = \dx \sum_{j=1}^{k-1} j^2 = (k^3 / 3 - k^2 / 2 + k / 6) \dx.
\end{equation}
The reasons for the differences in scaling laws are presently unknown.

Another quantity that can be analyzed is the correlation between a random neural network $y(x)$ and itself at a shifted location $x + h$.
The decay of the correlation with increasing $|h|$ can reveal the smoothness of $y(x)$, which may ultimately be tied back to the expected number of roots (see \citealt{AdlerICE} for a general reference).
Numerical experiments with the same weight and bias distributions as before, but with $n = 100$ neurons, are run with $10^4$ trials.
In this experiment, random neural networks are linearly interpolated onto the $n$ points dividing the $n + 1$ quantiles of the Cauchy distribution, and the correlation is computed as
\begin{equation}
    \corr(y(x), y(x + h)) = \frac{E(y(x) y(x + h))}{\sqrt{\var(y(x)) \var(y(x + h))}},
\end{equation}
since $E(y(x)) = 0$.

The result of this experiment is shown in \autoref{fig:correlation}.
\begin{figure}[!t]
    \centering
    \includegraphics{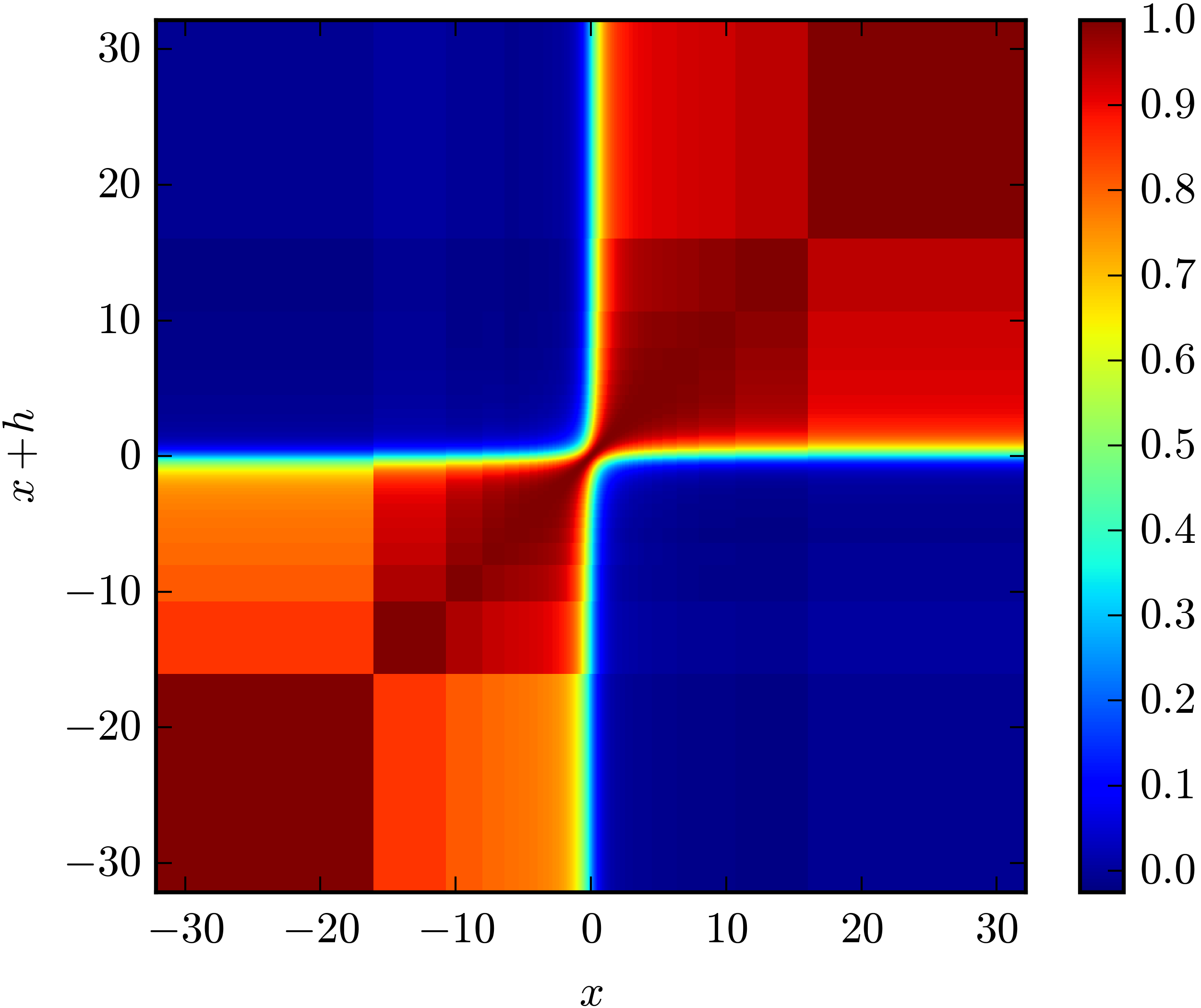}

    \caption{%
        The correlation coefficient between $y(x)$ and $y(x + h)$ for $n = 100$ neurons.%
    }
    \label{fig:correlation}
\end{figure}
A subset of this numerical result is also shown in \autoref{fig:correlation-h},
\begin{figure}[!t]
    \centering
    \includegraphics{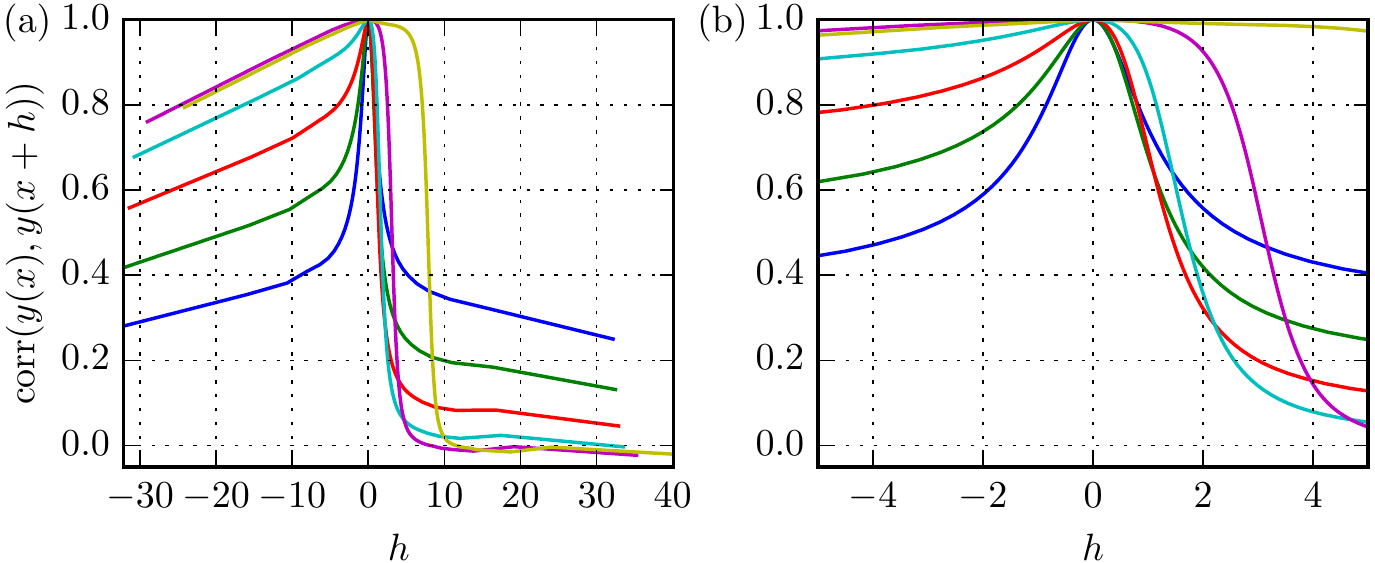}

    \caption{%
        The correlation between $y(x)$ and $y(x + h)$ for $x = -0.02$ (blue), $-0.34$ (green), $-0.74$ (red), $-1.39$ (cyan), $-3.11$ (magenta), and $-8.00$ (yellow).
        (a)~The entire domain.
        (b)~Zoomed near $h = 0$.%
    }
    \label{fig:correlation-h}
\end{figure}
which shows the correlation as a function of $h$ for different values of $x$.
In general, the random neural network at $x$ is notably correlated with itself at $x + h$ only if $\sgn(x) = \sgn(x + h)$.
We note as before that half of the knot points $x_j$ are in $(-1, 1)$, so the dynamics of the equivalent integrated random walk change very quickly near $x = 0$.
It is reasonable to posit that the relatively dense arrangement of knots in $(-1, 1)$ is enough to remove the correlation in $y$ between $x < 0$ and $x > 0$.
On the other hand, $y(x)$ is highly correlated with $y(x + h)$ for $x > 0$ and large $h$ (or for $x < 0$ and large $-h$) because the knot points are sparse for large $|x|$.

\section{Relation with early stages of training}
\label{sec:early-training}

\subsection{Zero bias}
\label{sec:zero_bias}

A choice of weight distributions effectively gives a measure on the function space parametrized by the neural network.
Therefore, exploring the structure of random networks under various weight distributions translates to statements about the output functions we expect to see from those networks.
The nature of random networks can be interpreted simply as a basic research problem, particularly with regard to the disparity between the potential complexity of the network and the empirically expected complexity, as given by the number of knots.

Yet, as we will later see, random networks also inform our understanding of actual training.
Standard network initialization methods produce weights which are either normal or uniform.
That is, in real networks, we expect to find weights distributed much as we see in the first several rows of \autoref{tab:distributions}.
Hence, our analysis almost immediately applies to actual training.

There remains, however, the question of the bias distributions.
Practically, the biases are almost always initialized to zero.
There are good reasons for this: first, the network is already made asymmetric by randomly selecting the weights, so there is no danger of identical nodes at the origin.
Second, just as with weights, there is no clear ``correct'' choice for initialization, so it may be better for the biases just to be zeroed.
Third, and probably most salient, such an initialization works, so there is little motivation to change.
We show in~\citet{WalkerSCAMP16} some of the consequences of zero bias initialization.
The primary one is that the network tends to fit the data closely near the origin: it learns from the origin outward.

Pathologies notwithstanding, it turns out that even with zero biases, most of the biases will be normally distributed after a tiny amount of gradient descent training, and the rest will be zero.
Although the variance of the normal part of the distribution is small, this result implies that when training real networks, we expect our description of random network structure to apply.

In this section, we explain the reason for the bias distribution, assuming that the neural network training is based on gradient descent, as is typically the case.
To show that the biases will have some distribution after one step of training, it suffices to compute the gradient of the loss function with respect to the biases.

\subsection{The gradient with respect to a bias}

To compute the gradient of the loss with respect to a bias, we first recall some notation for deep networks, as introduced in \autoref{sec:neural-nets}.
In layer $i = 1, \dots, l$, neuron $k = 1, \dots, n_i$ has a weight vector and bias respectively denoted by $\w_{i k}$ and $b_{i k}$.
Let the weight vector be $\w_{ik} = [w_{ik1} \; \cdots \; w_{ikn_{i-1}}]$.
Note that each scalar $w_{ikj}$ is the weight assigned by neuron $k$ of layer $i$ to the output value of neuron $j$ in layer $i-1$.
It will be important for us to understand the computed value at a neuron both before and after activation.
Using the notation in~\eqref{eq:deep}, let
\begin{equation}
    \vbar_{ik} =
    \begin{cases}
        \w_{ik} \cdot \x + b_{ik} &| \quad i = 1, \\
        \w_{ik} \cdot \v_{i-1} + b_{ik} &| \quad i > 1
    \end{cases}
\end{equation}
be the input to the rectified linear unit in neuron $k$ of hidden layer $i$, such that $v_{ik} = \sigma(\vbar_{ik})$ (cf.~\eqref{eq:deep-hidden}).
For simplicity, we will continue the assumption that the network has a single input and output.
As before, our analyses generalize immediately to multidimensional outputs.
It remains difficult to apply our analytical results to multidimensional inputs, but they still empirically hold.

Let us consider the typical loss functions given by the mean absolute error and the mean squared error.
If for a given input data point $x$, the desired output is $\hat{y}$ but the actual neural network output is $y$, then these losses are respectively $|y - \hat{y}|$ and $(y - \hat{y})^2$.
Their derivatives with respect to some bias $b_{ik}$ are
\begin{subequations}
    \begin{align}
        \p{|y - \hat{y}|}{b_{ik}} & = \sgn(y - \hat{y}) \frac{\partial y}{\partial b_{i k}}, \\
        \p{(y - \hat{y})^2}{b_{ik}} & = 2(y-\hat{y}) \frac{\partial y}{\partial b_{i k}}.
    \end{align}
\end{subequations}
In both cases, the gradient is a constant multiple of $\partial y / \partial b_{i k}$.
Thus, to understand the distribution of the gradient of the loss, it suffices to understand the gradient of the output.

Let $H$ denote the Heaviside step function
\begin{equation}
    H(x) :=
    \begin{cases}
        0 &| \quad x < 0 \\
        1 &| \quad x \ge 0
    \end{cases},
\end{equation}
which is the derivative of rectified linear unit $\sigma$.
Again, consider some bias $b_{ik}$.
Using the substitutions $j_{l+1} = 1$ and $j_i = k$, the chain rule yields
\begin{equation}
    \label{eq:gradient}
    \p{y}{b_{ik}} =
    \sum_{j_l=1}^{n_l} \sum_{j_{l-1}=1}^{n_{l-1}} \dots \sum_{j_{i+1}=1}^{n_{i+1}}
    \prod_{\alpha=i}^l w_{\alpha+1, j_{\alpha+1}, j_\alpha} H(\vbar_{\alpha j_\alpha});
\end{equation}
see \autoref{sec:gradient} for a derivation.
Although this expression looks complicated, it is conceptually straightforward.
This form iterates over all paths from neuron $k$ of layer $i$ to the output.
For each path, the weights along it and the associated Heaviside step functions are multiplied.
This entire expression is a function of the input $x$.

\subsection{The expected gradient}

In analyzing~\eqref{eq:gradient}, a pertinent question is how likely $H(\vbar_{i k}(x))$ is to be identically zero.
Because all the biases are zero at the start of neural network training, the form of $\vbar_{i k}(x)$ is actually quite simple: it must be a piecewise linear function with a single knot at the origin.
Thus, it looks like one of the four pictures shown in the top row of \autoref{fig:node_function_possibilities}.
\begin{figure}[!t]
    \centering
    \includegraphics{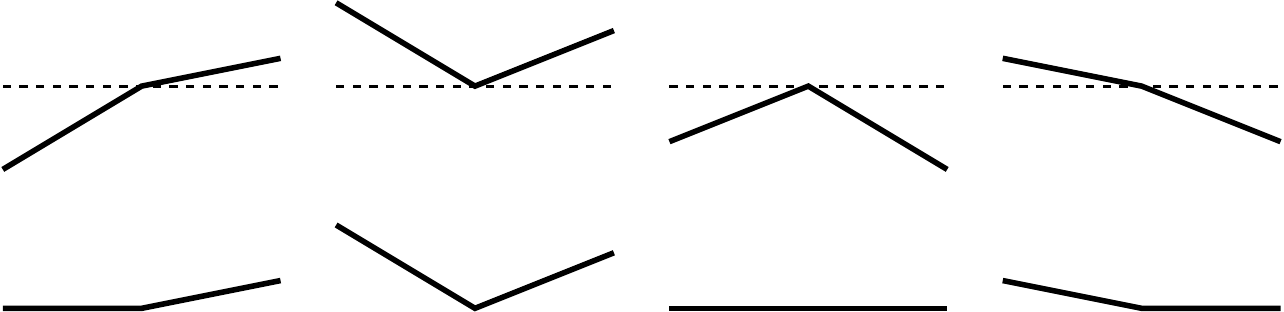}

    \caption{%
        The four possibilities for $\vbar_{i k}(x)$ (top) and $v_{i k}(x) = \sigma(\vbar_{i k}(x))$ (bottom) when all biases are zero.%
    }
    \label{fig:node_function_possibilities}
\end{figure}
The two slopes that define $\vbar_{i k}(x)$ are computed as linear combinations of the two slopes across the elements of the hidden layer output vector $\v_{i-1}(x)$.
To obtain $v_{i k}(x)$, we then apply the rectified linear unit to $\vbar_{i k}(x)$.
Hence, for many of the nodes, $\vbar_{i k}(x)$ is always negative and $v_{i k}(x)$ is therefore identically zero, as shown in the third column of \autoref{fig:node_function_possibilities}.
If the weights are independent and symmetric about zero, then this occurs for one quarter of the nodes in hidden layer $i = 2$.

In fact, as the layer index $i$ increases, the probability of an identically zero neuron grows.
The reason is that the left and right slopes of $\vbar_{i k}(x)$ are in fact correlated.
Understanding the precise correlation is a difficult problem, so it is hard to give the actual probability that $v_{i k}(x) = 0$ for all $x$.
Experimentally---and theoretically, in a simplified model---as the layer level $i$ increases, the probability that $v_{i k}(x) = 0$ for all $x$ approaches $1 / 2$.
Likewise, the probability that $v_{i k}$ is $\vee$-shaped also approaches $1 / 2$.

If $v_{i k}(x) = 0$ for all $x$, then the gradient $\partial y / \partial b_{i k} (x)$~\eqref{eq:gradient} will be zero.
By the argument above, this will apply to about half of the neurons as the layer level gets large.
On the other hand, if $v_{i k}(x)$ is not identically zero, then we can once again consider $\partial y / \partial b_{i k} (x)$.
With $l$ layers of $n$ neurons each, there exist $n^l$ paths from a neuron to the output.
Each of these paths produces a product in~\eqref{eq:gradient}.
Now, every node has a probability in $[1 / 4, 1 / 2)$ of being deactivated for all $x$, so the probability that a node is activated for some $x$ is at least $1 / 2$.
For an entire path to be nonzero for some $x$, all nodes along the path have to be activated, which occurs with probability at least $1/2^l$.
Hence, we expect more than $(n/2)^l$ of these products to be nonzero.
These are products of weights and Heaviside step functions, and are identically distributed.
Although they are not strictly independent, the correlations of the paths approaches zero as the path length increases.

Hence, we can apply the central limit theorem to draw our chief conclusion: the gradients $\partial y / \partial b_{i k}$ are random with a distribution given by the sum of a normal distribution and a point mass at zero.
The actual mass at zero, and a careful proof of many of the above assertions, are difficult to obtain and somewhat unnecessary, given how well the prediction is borne out in practice.

\section{Conclusion}
\label{sec:conclusion}

In this paper, we have analytically and numerically explored the knots of random scalar-input, scalar-output dense neural networks with rectified linear unit activations.
These random networks are encountered in early stages of training on data, as well as in extreme learning machines.
Numerical experiments show that the number of knots in $\Reals \to \Reals$ random neural networks is equal to the number of neurons, to very close approximation.
The same result would hold for $\Reals \to \Reals^p$ networks.
This number of knots empirically holds even when weights and biases are drawn from different combinations of the $N(0, 1)$, $U(-1, 1)$ and $\{-1, 1\}$ distributions.
The main thrust of this paper was to develop an analytical explanation of this behavior, and to comment on its implications.

Although we are unable at this point to develop a proof on the expected number of knots, we do develop certain equivalences that we believe may point toward an eventual proof.
In Lemma~\ref{lem:none-eliminated}, we showed that in the limit of $n_i \to \infty$ neurons in layer $i$, all knots from layer $i - 1$ are preserved in the layer $i$ outputs.
Therefore, we developed the central conjecture, Conjecture~\ref{conj:one-root}, which states that the input to every neuron is expected to have one root in this limit.
Again, this result is empirically supported, though a direct proof remains elusive.
To study this conjecture, we state in \autoref{thm:equivalence} that the input to every neuron is equivalently an integrated random walk with specific initial conditions.

In this paper, we also explored certain properties related to the general behavior of the neural network and the equivalent integrated random walk.
These properties include the correlation between integrated random walk parameters, as well as the scaling of the probability distributions.
Furthermore, we explored the decomposition of the integrated random walk into one with homogeneous initial conditions, and the straight line stemming from the nonzero initial conditions.
We then analyzed the zero crossings of integrated random walks with homogeneous initial conditions, the effects of changing the initial conditions, and the variances and covariances of the integrated random walks.

Next, we investigated the relation between random neural networks and the early stages of training.
The integrated random walk has certain properties such as smoothness in the continuum limit, which prevent optimizers from immediately overfitting to data.
We showed that although optimizers typically initialize biases to zero, the probability distribution of the biases quickly becomes a normal distribution plus a point mass at zero.
Thus, we verify that the random neural networks explored in this paper are in fact encountered in the real-life training of neural networks on data.

Recently, we have also analyzed different aspects of neural network theory.
In one companion paper \citep{ccr86994}, we derive a tight upper bound on the number of knots with the same neural network architecture we investigated in this paper.
In another \citep{WalkerSCAMP16}, we present empirical results on the relation between the size of neural networks and various parameters related to training on data.

Yet, many other important and relevant topics remain to be explored.
As mentioned above, a proof that the expected number of knots in a random neural network is equal to the number of neurons still remains to be found.
In addition, these $\Reals \to \Reals^p$ results need to be extended to $\Reals^q \to \Reals^p$ networks, for which the neural network divides the input space into convex polytopes, and the number of linear regions would have to be counted.
Furthermore, we would like to inquire if optimizers can be initialized differently to produce more complex networks in the early stages of training.
Such a task amounts to randomizing the neural network to be less smooth than $C^1$ in the continuum limit.
Although this increases the risk of overfitting, it may be advantageous if the function to be modeled is known to be complex.

Yet other questions in neural network theory can be explored.
Some works \citep[e.g.,][]{Raghu16} use the arclength of a neural network as an alternative measure of expressivity, and the arclength of random networks can be empirically measured.
In addition, we would like to obtain a practical measure of neural network expressivity by determining how well neural networks can fit functions in various Sobolev spaces.
This can be determined by constructing sums of basis functions with specified decay rates on the function coefficients, and empirically fitting neural networks to these sums.

Together, the ideas presented in the papers, along with the proposed research ideas, will help us form a better idea of neural network capabilities.
These ideas can help us construct a set of guidelines determining the network size that is necessary or sufficient for modeling specific functions.

\appendix

\section{Appendix: Roots beyond \texorpdfstring{$(x_1, x_n)$}{(x\_1, x\_n)}}
\label{sec:ends}

Let us consider the last entry of \autoref{tab:distributions}, in which we examine the roots of~\eqref{eq:scalar-shallow} with $w_{1j}, w_{2j} \sim \{-1, 1\}$, $b_{1j} \sim U(-1, 1)$, and $b_2 = 0$, all independently.
The discrepancy in the knots counts in $\Reals$ and in $(x_1, x_n)$ implies that the expected number of knots in $(-\infty, x_1) \cup (x_n, \infty)$ is greater than 0.
Upon examination of~\eqref{eq:forward-relu}, we see that the affine transformation in $(-\infty, x_1)$ is simply $c_1 x + c_0$, since none of the rectified linear units are activated.
Thus, a root appears in this interval if and only if the slope at $x < x_1$ has the same sign as $y(x_1)$---that is, $\sgn(c_1) = \sgn(c_1 x_1 + c_0)$, or equivalently,
\begin{equation}
    \label{eq:end-root-condition}
    x_1 + \frac{c_0}{c_1} > 0.
\end{equation}
In the limit that $n \to \infty$, $x_1 \to -1$, so a root appears in $(-\infty, x_1)$ if and only if $c_0 / c_1 > 1$.

The distributions for $c_0$ and $c_1$ can be computed by following the argument given in~(\ref{eq:dist-0}--\ref{eq:dist-1}) and employing the central limit theorem on $dF_i(c_i | r)$.
Since $\{-1, 1\}$ has unity variance and $U(-1, 1)$ has variance $1 / 3$, this argument shows that as $n \to \infty$, $c_0 \sim N(0, n / 6)$ and $c_1 \sim N(0, n / 2)$.
Numerical results agree with these estimates.
We demonstrated in \autoref{sec:forward-facing} that $c_0$ and $c_1$ are uncorrelated, but it can be further shown that they are independent for our choice of weight and bias distributions.
Thus, $c_0 / c_1 \sim \cauchy(0, 1 / \sqrt{3})$; that is, $c_0 / c_1$ has the CDF $\tan^{-1} (\sqrt{3} x) / \pi + 1 / 2$.
This CDF shows that $P(c_0 / c_1 > 1) = 1 / 6$.

Therefore, the probability that a root appears in $(-\infty, x_1)$ is $1 / 6$.
By symmetry arguments, it can also be shown that the probability that a root appears in $(x_n, \infty)$ is likewise $1 / 6$.
Hence, if empirically, one root is expected to appear in $(-\infty, \infty)$, then $2 / 3$ roots are expected to appear in $(x_1, x_n)$.
This is accurately reflected in \autoref{tab:distributions}.

Next, let $\phi$ and $\Phi$ respectively denote the PDF and CDF of the standard normal distribution.
In the penultimate entry of \autoref{tab:distributions}, where $w_{1j}, w_{2j} \sim \{-1, 1\}$ and $b_{1j} \sim N(-1, 1)$ independently, the most negative knot location $x_1$ is given by the first order statistic with probability density function $n \phi(x) (1 - \Phi(x))^{n-1}$.
The first order statistic must clearly decrease with increasing $n$, though the order statistic is sub-logarithmic in $n$, as follows.
The probability that $x_1$ is greater than some $x$ is $(1 - \Phi(x))^n$, so the CDF of $x_1$ is $1 - (1 - \Phi(x))^n$.
Thus, the median $\tilde{x}_1$ of $x_1$ is given by $1 - (1 - \Phi(\tilde{x}_1))^n = 1 / 2$, or
\begin{equation}
    \tilde{x}_1 = \Phi^{-1}(1 - 2^{-1 / n}).
\end{equation}
Using a series expansion for $n \to \infty$, it can be shown that retaining the leading orders,
\begin{equation}
    \tilde{x}_1 = -\sqrt{2 (\ln n - \ln \ln 2) - \ln(2 \pi) - \ln(2 (\ln n - \ln \ln 2) - \ln(2 \pi)))},
\end{equation}
which is dominated by $-\sqrt{2 \ln n}$.

On the other hand, we can follow a scaling argument reminiscent of the prior example and find that the variances of $c_0$ and $c_1$ both scale by $n$; hence, the distribution of $c_0 / c_1$ is independent of $n$.
Thus, as $n \to \infty$ increases, we expect that the probability of~\eqref{eq:end-root-condition} approaches zero, albeit at root-logarithmic rate.
Hence, the number of roots in $(x_1, x_n)$ will approach the number of roots in $(-\infty, \infty)$.
The convergence is extremely slow, however, and the limit cannot be reflected accurately in \autoref{tab:distributions}.

Lastly, we can see that in most of the entries of \autoref{tab:distributions}, nearly all the roots of~\eqref{eq:scalar-shallow} lie within $(x_1, x_n)$.
Differences between the average numbers of roots in $(-\infty, \infty)$ and $(x_1, x_n)$ are statistically indistinguishable.
This feature can once again be explained using scaling arguments.

Let us consider the case where $w_{1j}, b_{1j} \sim N(0, 1)$, $w_{2j} \sim \{-1, 1\}$, and $b_2 = 0$, all independently, as in \autoref{sec:example-distributions}.
It was shown there that $c_0, c_1 \sim N(0, n / 2)$.
Furthermore, the two parameters are independent; thus, $c_0 / c_1 \sim \cauchy(0, \sqrt{n / 2})$.
On the other hand, $x_j = - b_{1j} / w_{1j} \sim \cauchy(0, 1)$.
Denote by $g(x) = 1 / (\pi (x^2 + 1))$ and $G(x) = (\tan^{-1} x) / \pi + 1 / 2$ the PDF and CDF of the $\cauchy(0, 1)$ distribution, respectively.
Let us specifically consider $x \ll -1$, since smallest of $n$ samples from the $\cauchy(0, 1)$ distribution must almost certainly be much less than $-1$ for large $n$.
The PDF of $x_1$ and its series expansion for $x \ll -1$ are
\begin{subequations}
    \begin{align}
        n g(x) (1 - G(x))^{n - 1}
        &= \frac{n}{\pi (x^2 + 1)} \left(\frac{1}{2} - \frac{\tan^{-1} x}{\pi}\right)^{n-1} \\
        \label{eq:cauchy-order-stat}
        &= \frac{n}{\pi x^2} + \frac{n (n - 1)}{\pi^2 x^3} + \O(x^4).
    \end{align}
\end{subequations}
Also, observe that for some scaling factor $a > 0$,
\begin{equation}
    \frac{a^2 n}{\pi ((a x)^2 + 1)} \left(\frac{1}{2} - \frac{\tan^{-1} (a x)}{\pi}\right)^{a n-1}
    = \frac{n}{\pi x^2} + \frac{n (n - 1 / a)}{\pi^2 x^3} + \O(x^4),
\end{equation}
which differs from~\eqref{eq:cauchy-order-stat} by $\O(x^{-3})$.
Thus, as $n$ scales by $a$, $x_1$ must also scale by $a$---and therefore $x_1$ scales by $n$---to leading order.

As a side note, this could have been shown more quickly but less rigorously by considering only the median $\tilde{x}_1$ of $x_1$, as previously, instead of scaling its entire PDF\@.
The CDF of $\tilde{x}_1$ is $1 - (1 - ((\tan^{-1} x) / \pi + 1 / 2))^n$.
Setting this is equal to $1 / 2$, the median and its series expansion are
\begin{subequations}
    \begin{align}
        \tilde{x}_1 &= \tan\left(\pi\left(\frac{1}{2} - \frac{1}{2^{1 / n}}\right)\right) \\
        &= -\frac{n}{(\ln 2) \pi} - \frac{1}{2 \pi} + \O(n^{-1}),
    \end{align}
\end{subequations}
which scale by $n$ to leading order.

In all, $c_0 / c_1$ is centered at 0 and scales by $\sqrt{n / 2}$, but $x_1 \ll -1$ scales by $n$.
By this argument,~\eqref{eq:end-root-condition} is only satisfied with very small probability, even for modest $n$---with probability about 0.02 for $n = 50$---so almost all roots must reside within $(x_1, x_n)$.

\section{Appendix: Derivation of backpropagation gradient}
\label{sec:gradient}

Starting from~\eqref{eq:deep-output}, the scalar output of the neural
network is
\begin{subequations}
    \begin{align}
        y &= \w_{l+1, 1} \cdot \v_l + b_{l+1, 1} \\
        &= \sum_{j_l=1}^{n_l} w_{l+1,1,j_l} v_{l j_l} + b_{l+1,1}.
    \end{align}
\end{subequations}
Applying the chain rule, we have that
\begin{subequations}
    \begin{align}
        \p{y}{b_{ik}}
        &= \sum_{j_l=1}^{n_l} w_{l+1,1,j_l} \p{v_{l j_l}}{b_{ik}} \\
        &= \sum_{j_l=1}^{n_l} w_{l+1,1,j_l} H(\vbar_{l j_l}) \p{\vbar_{l j_l}}{b_{ik}}.
    \end{align}
\end{subequations}
Note that
\begin{subequations}
    \begin{align}
        \vbar_{l j_l} &= \w_{l j_l} \cdot \v_{l-1} + b_{l j_l} \\
        &= \sum_{j_{l-1}=1}^{n_{l-1}} w_{l j_l j_{l-1}} v_{l-1,j_{l-1}} + b_{l j_l};
    \end{align}
\end{subequations}
thus, continuing to apply the chain rule,
\begin{subequations}
    \begin{align}
        \p{y}{b_{ik}}
        &= \sum_{j_l=1}^{n_l} w_{l+1,1,j_l} H(\vbar_{l j_l})
        \sum_{j_{l-1}=1}^{n_{l-1}} w_{l j_l j_{l-1}} \p{v_{l-1,j_{l-1}}}{b_{ik}} \\
        &= \sum_{j_l=1}^{n_l} w_{l+1,1,j_l} H(\vbar_{l j_l})
        \sum_{j_{l-1}=1}^{n_{l-1}} w_{l j_l j_{l-1}} H(\vbar_{l-1,j_{l-1}}) \p{\vbar_{l-1,j_{l-1}}}{b_{ik}},
    \end{align}
\end{subequations}
and so on.
Finally, the chain rule ends with
\begin{equation}
    \p{v_{i j_i}}{b_{ik}} =
    \begin{cases}
        H(\vbar_{ik}) &| \quad j_i = k, \\
        0 &| \quad j_i \ne k
    \end{cases}.
\end{equation}
Therefore, the chain of sums continues to level $i + 1$, with level $i$ completing the chain rule with the term $w_{i+1, j_{i+1}, k} H(\vbar_{ik})$.
In all, we have that
\begin{equation}
    \begin{split}
        \p{y}{b_{ik}}
        &=
        \sum_{j_l=1}^{n_l} w_{l+1,1,j_l} H(\vbar_{l j_l})
        \sum_{j_{l-1}=1}^{n_{l-1}} w_{l j_l j_{l-1}} H(\vbar_{l-1, j_{l-1}}) \\
        &\quad \sum_{j_{l-2}=1}^{n_{l-2}} w_{l-1, j_{l-1}, j_{l-2}} H(\vbar_{l-2, j_{l-2}})
        \cdots \\
        &\quad \sum_{j_{i+1}=1}^{n_{i+1}} w_{i+2, j_{i+2}, j_{i+1}} H(\vbar_{i+1, j_{i+1}})
        w_{i+1, j_{i+1}, k} H(\vbar_{ik}).
    \end{split}
\end{equation}
By rearranging the sums and performing the appropriate substitutions, we obtain~\eqref{eq:gradient}.

\bibliography{ChenNIPS16}

\end{document}